\lstdefinestyle{shaclstyle}{
  basicstyle=\ttfamily\small,
  columns=fullflexible,
  keepspaces=true,
  breaklines=true,
}
\setlist[description]{leftmargin=20pt,labelindent=0pt,noitemsep,topsep=4pt,parsep=6pt,partopsep=5pt}
\newcommand{\ISA}{\sqsubseteq}
\newcommand{\ostar}{\mathbin{\mathpalette\make@circled\star}}
\newcommand{\make@circled}[2]{%
  \ooalign{$\m@th#1\smallbigcirc{#1}$\cr\hidewidth$\m@th#1#2$\hidewidth\cr}%
}
\newcommand{\smallbigcirc}[1]{%
  \vcenter{\hbox{\scalebox{0.77778}{$\m@th#1\bigcirc$}}}%
}
\newcommand{\evalu}[2]{\llbracket #1 \rrbracket^{#2}}
\newcommand{\atleast}[3]{{\geq_{#1}\,}{#2}{.#3}}
\newcommand{\add}{\overset{\oplus}{\longleftarrow}}
\newcommand{\del}{\overset{\ominus}{\longleftarrow}}
\begin{document}

\title{SHACL Validation under Graph Updates (Extended Paper)}
\titlerunning{SHACL Validation of Evolving Graphs}
\authorrunning{Shqiponja Ahmetaj et al.}
\author{Shqiponja Ahmetaj\inst{1}\orcidlink{0000-0003-3165-3568}
\and George Konstantinidis\inst{2}\orcidlink{0000-0002-3962-9303}
\and Magdalena Ortiz\inst{1}\orcidlink{0000-0002-2344-9658}
\and Paolo Pareti\inst{2}\orcidlink{0000-0002-2502-0011}
\and Mantas Simkus\inst{1}\orcidlink{0000-0003-0632-0294}
}
\institute{Vienna University of Technology (TU Wien), Austria \\
\email{shqiponja.ahmetaj@tuwien.ac.at} 
\and
University of Southampton, United Kingdom
}
\maketitle
\begin{abstract}
SHACL (SHApe Constraint Language) is a W3C standardized constraint language for RDF graphs. In this paper, we study SHACL validation in RDF graphs under updates. We present a SHACL-based update language that can capture intuitive and realistic modifications on RDF graphs and study the problem of static validation under such updates. This problem asks to verify whether every graph that validates a SHACL specification will still do so after applying a given update sequence. More importantly, it provides a basis for further services for reasoning about evolving RDF graphs.
Using a regression technique that embeds the update actions into SHACL constraints, we show that static validation under updates can be reduced to (un)satisfiability of constraints in (a minor extension of) SHACL. We analyze the computational complexity of the static validation problem for SHACL and some key fragments. Finally, we present a prototype implementation that performs static validation and other static analysis tasks on SHACL constraints and demonstrate its behavior through preliminary experiments.
\end{abstract}

\keywords{SHACL, satisfiability, evolving graphs, static validation, updates}


\section{Introduction}

The SHACL (SHApe Constraint Language) standard, a W3C recommendation since 2017, provides a formal language for describing and validating integrity constraints on RDF data. 
In SHACL we can express, for example, that instances of the class Person must have exactly one date of birth.  
SHACL specifications consist of a \emph{shapes graph} that pairs \emph{shape} constraints with \emph{targets} specifying which nodes in a data graph must satisfy which shapes. 
The central service in SHACL is \emph{validation}, that is, determining whether an RDF graph conforms to a given shape graph. SHACL is being increasingly adopted as the basic means for ensuring data quality and integrity in RDF-based applications, and the development of SHACL validators and related tools is rapidly evolving \cite{KeZA24,Seifer0LS24,AhmetajBHHJGMMM25,OkulmusS24}.

RDF graphs can be large and may be subject to frequent changes and updates. 
The development of an appropriate standard for describing these updates is an ongoing effort in which the Semantic Web community has invested considerable effort \cite{Polleres:13:SU}.
While there are technologies for updating RDF graphs and validating them against SHACL specifications, these two steps currently need to be handled independently: if a validated graph is updated, it has to be re-validated from scratch, which can be expensive, and if an update leads to non-validation, returning to a valid state may be difficult or even impossible.  
This challenge becomes even more critical in privacy-sensitive domains and systems that integrate heterogeneous data sources, where direct access to the entire data graph is either restricted or impractical. For example, in healthcare or federated knowledge graphs, data may be held by external providers or evolve independently across sources, so executing an update that results in non-validation could be highly undesirable.
This motivates our study of \emph{validation under updates}, in which we study two problems of interest. The first is to determine whether a given update to a graph will preserve the validation of a SHACL constraint, enabling users to guarantee that data quality is preserved by identifying problematic updates before they are executed.
We then move to our main goal: determining whether a given update preserves certain SHACL constraints for every input data graph.


 Consider a hospital management system that stores patient data and enforces the SHACL constraints: (i) every patient must be linked to at least one address, and (ii) every address must include both a city and a house number.
This can be expressed in SHACL abstract syntax as the following shapes graph $(C,T)$
\begin{align*}
    C = & (\mathsf{PatientShape} \leftrightarrow \exists\,{hasAddress}.Address,\\
   & \mathsf{AddressShape} \leftrightarrow \exists\, {hasCity} \wedge \exists\,{hasHouseNumber}) \\
    T = &((Patient, \mathsf{PatientShape}), (Address, \mathsf{AddressShape}))
\end{align*}
Now, suppose that, as part of a privacy-preserving policy, the hospital decides to stop collecting the house numbers of patients' addresses. This is implemented as an update action that removes all $hasHouseNumber$ triples in RDF from addresses linked to patients, that is all atoms of the form  the $hasHouseNumber(x,y)$,\ where $x$ must satisfy the shape $\exists hasAddress^-.Patient$ and $y$ can be any node, and can be expressed in SHACL with $\top$. The question is whether such an update is safe: does it preserve the SHACL constraints for all input graphs? Unfortunately, in this case, the update violates
the constraint for $\mathsf{AddressShape}$, as the updated data would no longer contain required house numbers. 
This highlights a critical need for static validation; that is, before applying the update, we must verify that it will not break constraints for any valid input graph. Since data evolves frequently (e.g., hourly admissions), validating updates against every snapshot is infeasible. Static validation enables correctness guarantees across all current and future graphs, without exposing sensitive information. If an update is unsafe, one must revise either the update or the constraints (e.g., update the $\mathsf{AddressShape}$ to only require ${hasCity}$). 

In our setting, the modifications to RDF graphs are described in a simple update language that captures the core fragment of SPARQL Update. It is designed to be intuitive for SHACL practitioners, in the sense that SHACL expressions can be used to select the nodes affected by the updates and as preconditions in the actions.
We are not aware of other works on the preservation of SHACL constraints under updates. 
Some authors have examined the interaction between updates and schema information in RDF-S \cite{FlourisKAC13} or even ontological knowledge \cite{AhmetiCP14,WandjiC24,ZablithAdFKMPS15}.
In the context of graph databases, Bonifati et al. have considered the joint evolution of graphs and schema, focusing on different schema languages and reasoning problems \cite{BonifatiFGHOV19}. In relational databases, static validation is well-understood \cite{Kachniarz2001}. 
Particularly relevant to us is~\cite{AhmetajCOS17}, which studied validation under updates in dynamic graph-structured data, using a custom description logic \cite{DBLP:books/daglib/0041477} due to the lack of a standard constraint language. With SHACL now established as a W3C standard for RDF constraint specification, we revisit these problems in the SHACL setting. Our focus is to develop a foundational framework for reasoning about SHACL validation under RDF updates: we introduce a SHACL-aware action language, leverage SHACL validation for conditional execution, and study static validation under such updates. SHACL’s expressive language and specific validation semantics introduce technical challenges that require careful handling. A preliminary version of this work has been published in \cite{DBLP:conf/amw/Ahmetaj0S24}. The main contributions of this work are:

$\circ$ We introduce a declarative language for specifying updates on RDF graphs that uses SHACL shapes both to select affected nodes and to define preconditions for conditional updates. The language covers a large fragment of SPARQL Update and extends it with if–then–else constructs based on SHACL validation checks. To allow more expressive update actions, we also extend SHACL paths with features such as difference and more expressive targets.

$\circ$ We adapt the \emph{regression} method from \cite{AhmetajCOS17} to rewrite an input shapes graph by incorporating the effects of actions 'backwards', allowing us to show that validation under updates can be reduced to standard validation in a small extension of SHACL.  We prove the correctness of the method and show that the SHACL extension is necessary to capture the effects of actions on constraints accurately.      

$\circ$ We then study the problem of \emph{static validation under updates}, which checks whether the execution of a given action preserves the SHACL constraints for every initial data graph. Using the regression technique, we show that static validation under updates can be reduced to (un)satisfiability of a shapes graph in (a minor extension of) SHACL. Since satisfiability is known to be undecidable already for plain SHACL \cite{DBLP:conf/semweb/ParetiKMN20}, we leverage the results of \cite{AhmetajCOS17} to identify expressive fragments for which the problem is feasible in \textsc{coNexpTime} and \textsc{ExpTime}.  

$\circ$ We provide an implementation of static verification under updates 
for an expressive subset of the studied action language. We provide scalability results showing that, despite its high computational complexity, 
static verification is feasible 
for medium-sized shapes graphs and large numbers of actions.

\section{SHACL Validation}

In this section, we introduce RDF graphs and SHACL \emph{validation}. 
We follow the abstract syntax and semantics for the fragment of SHACL core studied in~\cite{shqi-etal-2021kr}; for more details on the W3C specification of SHACL core we refer to~\cite{shacl}, and for details of its relation with DLs to~\cite{DBLP:conf/wollic/Ortiz23,DBLP:conf/lpnmr/BogaertsJB22}.

\paragraph{\textbf{RDF Graphs.}} We let $N_N$, $N_C$, $N_P$ denote countably infinite, mutually disjoint sets of \emph{nodes} (constants), \emph{class names}, and \emph{property names}, respectively. 
An RDF (data) graph $G$ is a finite set of (ground) \emph{atoms} of the form $B({c})$ and $p({c},{d})$, where  $B \in N_C$, $p \in N_P$, and
${c},{d} \in N_N$. The set of nodes appearing in $G$ is denoted with $V(G)$. 
\paragraph{\textbf{Syntax of SHACL.}}
 Let $N_S$ be  a countably infinite set 
of \emph{shape names}, disjoint from $N_N$, $N_C$, $N_P$. A
\emph{shape atom} has the form  $\mathsf{s}(a)$, where
 $\mathsf{s} \in N_S$ and $a \in N_N$. A \emph{path expression} $E$ is a
regular expression built using the usual operators $*$, $\cdot$,
$\cup$ from symbols in
$N_P^{+}=N_P\cup \{p^{-}\mid p\in N_P \}$, where
$p^{-}$ is the \emph{inverse property} of $p$. A
\emph{(complex) shape} is an expression $\phi$ obeying~the~syntax:
\begin{align*}
 \phi,\phi'::= \top \mid \mathsf{s}\mid B \mid c \mid \phi\land \phi'  \mid \neg \phi \mid  \geq_n E.\phi \mid E = p  \mid disj(E,p) \mid closed(P)
\end{align*}
where  $\mathsf{s} \in N_S$, $p \in N_P$, $B \in N_C$, $c \in N_N$, $P \subseteq  N_P$, $n$ is a positive integer, and $E$ is a path expression. In what follows, we write
$\phi\lor \phi'$ instead of $\neg (\neg \phi\land \neg \phi')$;
$\geq_n E$ instead of $\geq_n E.\top$;
$\exists E.\phi$ instead of $\geq_1 E.\phi $; 
$\forall E.\phi$ instead of $\neg \exists E.\neg
\phi$. 

A \emph{(shape) constraint} is an expression of the form
 $\mathsf{s} \leftrightarrow \phi$, where  $\mathsf{s} \in N_S$ and $\phi$ is a possibly complex shape. \emph{Targets} in SHACL prescribe that certain nodes of the input data graph should validate certain shapes. A \emph{target expression} is of the form $(W,\mathsf{s})$, where  $\mathsf{s} \in N_S$, and $W$ takes one of the following forms: (i) constant from $N_N$, also called \emph{node target}, (ii) class name from $N_C$, also called \emph{class target}, (iii) expressions of the form $\exists p$ with $p \in N_P$, also called \emph{subjects-of target}, (iv) expressions of the form $\exists p^-$ with $p \in N_P$, also called \emph{objects-of target}.
A target is any set of target expressions. A \emph{shapes graph} is a pair $( {C,T})$, where ${C}$ is a set of constraints and ${T}$ is a set of targets. We assume that each shape name appearing in ${C}$ occurs exactly once on the left-hand side of a constraint, and each shape name occurring in $T$ must also appear in $C$. A set of constraints ${C}$ is \emph{recursive}, if there is a shape name in ${C}$ that directly or indirectly refers to itself. In this work, we focus on \emph{non-recursive} constraints. 

\paragraph{\textbf{Semantics of SHACL.}}
The evaluation of shape expressions is given by assigning nodes of the
data graph to (possibly multiple) shape names. More formally, a
\emph{(shape) assignment} for a data graph $G$ is a set $I = G \cup L$, where $L$ is a set of shape atoms such that ${a} \in V(G)$ for each ${s}({a}) \in L$. Let $V(I)$ denote the set of nodes that appear in $I$. The evaluation of a complex shape w.r.t.\,an assignment $I$ is given in terms of a function $\llbracket \cdot \rrbracket^I$ that maps a shape expression $\phi$ to a set of nodes, and a
path expression $E$ to a set of pairs of nodes as defined in Table~\ref{tab:evaluation}. We assume that the nodes appearing in a shapes graph $\mathcal{S}$ occur in the graph $G$.

 Let $I$ be a shapes assignment for a data graph $G$. We say $I$ is a \emph{model} of (or satisfies) a constraint $\mathsf{s} \leftrightarrow \phi$ if $\evalu{\mathsf{s}}{I} = \evalu{\phi}{I}$; $I$ is a model of a set of constraints $C$ if $I$ is a model of every constraint in $C$;  $I$ is a model of a target $(W,\mathsf{s})$ if $\evalu{W}{I} \subseteq \evalu{\mathsf{s}}{I}$; $I$ is a model of a target set $T$ if $I$ is a model of every $(W,\mathsf{s}) \in T$. Assume a SHACL shapes graph $( {C,T} )$ and a data graph $G$ such that each node that appears in ${C}$ or ${T}$ also appears in $G$. The data graph $G$ \emph{validates}
$( {C,T} )$ if there exists an assignment $I = G \cup L$
for $G$ such that 
\begin{inparaenum}
\item[(i)] $I$ is a model of ${C}$, and 
\item[(ii)] $I$ is a model of $T$.
\end{inparaenum}
 
 Clearly, for non-recursive constraints, which is the setting we consider here, the unique assignment obtained in a bottom-up fashion, starting from $G$ and evaluating each constraint once, is a model of $C$. More precisely, one can start from constraints of the form  $\mathsf{s}_1 \leftrightarrow \phi_1$, where $\phi_1$ has no shape names, and add to $G$ all the atoms  $\mathsf{s}_1(a)$ such that $a \in \llbracket \phi_1 \rrbracket^G$; let the result be $G \cup L_1$. We then proceed with constraints of the form  $\mathsf{s}_2 \leftrightarrow \phi_2$, where $\phi_2$ has only shape names occurring in $L_1$ and add in $L_2$ all  $\mathsf{s}_2(a)$ such that $a \in \llbracket \phi_2 \rrbracket^{G\cup L_1}$; the new assignment is $G \cup L_1 \cup L_2$. By iteratively evaluating all the  constraints in this manner we obtain in polynomial time a unique  
 assignment, denoted $I_{G,C} = G \cup L_{G,C}$
 that 
 is a model of $C$. 
 If $I_{G,C}$ is also a model of $T$, then  we that 
 $G$ \emph{validates}
$( C, T )$. 
We call a shapes graph $( C, T )$ \emph{satisfiable} if there exists some data graph $G$ that validates it. 
    \begin{table}
  \renewcommand{\arraystretch}{1.7}
  \begin{tabular}{l}
     $\evalu{\top}{I}=  V(I)$ \qquad \qquad \qquad  \quad $\evalu{{c}}{I} =  \{{c}\}$ \qquad \qquad \qquad  $\evalu{B}{I} =  \{{c} \mid B({c}) \in I\}$ 
   \\
    ${ \mathsf{s} }^{I} =  \{{c}\mid {\mathsf{s}}({c})\in I\}$ \qquad \qquad 
    $ \evalu{ \neg \phi}{I}=  V(I) \setminus \evalu{  \phi}{I}$  \qquad 
    $\evalu{ \phi_1\land \phi_2 }{I} =  \evalu{ \phi_1 }{I}\cap \evalu{ \phi_2 }{I}$ \\
        $\evalu{ p  }{I}=   \{ ({a},{b})\mid p({a},{b})\in I \}$ \qquad\qquad  
     $\evalu{ p^-  }{I}=  \{ ({a},{b})\mid p({b},{a})\in I \}$ \\
     $\evalu{ E\cdot E'   }{I}=  \{({a},{b}) \mid \exists d: (a,d) \in \evalu{ E}{I} \mbox{ and } (d,b) \in \evalu{E'}{I} \}$ \\
    $\evalu{ E\cup E'  }{I}=   \evalu{ E  }{I} \cup  \evalu{  E'  }{I}$ \\
    $\evalu{E^{*}  }{I}=  \{({a},{a}) \mid {a} \in V(I)\} \cup \evalu{ E}{I} \cup \evalu{E\cdot E}{I} \cup \cdots$ \\
    $\evalu{ \atleast{n}{E}{\phi} }{I}=  \{ {c} \mid |\{ ({c}, {d})  \in \evalu{E}{I} \text{ and } {d} \in \evalu{ \phi }{I} \}| \geq n \}$  \\
    $\evalu{ E = p }{I}=  \{ {c} \mid \forall {d}:({c},{d}) \in \evalu{E}{I} \mbox{ iff } ({c}, {d}) \in \evalu{p}{I} \}$   \\
    $\evalu{ disj(E,p) }{I}=  \{ {c} \mid \not\exists {d}:({c},{d}) \in \evalu{E}{I} \mbox{ and } ({c}, {d}) \in \evalu{p}{I} \}$  \\
    $\evalu{ closed(P) }{I}=  \{ {c} \mid \forall p \notin P:{c} \notin \evalu{\exists p}{I}\}$ 
  \end{tabular}
  \caption{Semantics of SHACL shape expressions  \label{tab:evaluation}}
\end{table} 

\vspace{-2\baselineskip}



\begin{example}\label{ex:validation} Consider the following data graph $G$ and shapes graph $(C,T)$: 
\begin{align*}
    G= \{ & Patient(p_1), Patient(p_2), ActivePatient(p_1), ActivePatient(p_2),  \\
          & Physician(Ann), Physician(Ben), Physician(Tom),  \\
     & treatsPatient(Ann, p_1), treatsPatient(Ben, p_1), treatsPatient(Tom,p_2) \} \\
     C =  \{& \mathsf{PatientShape} \leftrightarrow ActivePatient \lor DischargePatient, \\
     & \mathsf{PhysicianShape} \leftrightarrow Physician \lor \exists treatsPatient.ActivePatient \}, \\
      T = \{ & (Patient, \mathsf{PatientShape}), (\exists treatsPatient, \mathsf{PhysicianShape})\}
\end{align*}
The data graph describes two patients, $p_1$ and $p_2$, both marked as active patients. There are three physicians: $Ann$, $Ben$, and $Tom$. $Ann$ and $Ben$ are assigned to treat $p_1$, and $Tom$ treats $p_2$. The shapes graph specifies the following constraints.	Every node that is a patient must also be either an active patient or a discharged patient. Every node that treats a patient (i.e., has an outgoing $treatsPatient$-edge) must either be a physician or must treat a patient who is an active patient. Clearly, $G$ validates $(C,T)$ since $I_{G,C}$ is a model of $T$, where $I_{G,C}$ contains the shape atoms $\mathsf{PatientShape}(p_1)$, $ \mathsf{PatientShape}(p_2)$, $\mathsf{PhysicianShape}(Ann)$, $\mathsf{PhysicianShape}(Ben)$, $\mathsf{PhysicianShape}(Tom)$.
\end{example}

\section{Updating RDF Graphs}

In this section, we define an update language that is both expressive and tightly integrated with SHACL validation. To achieve this, we begin by extending SHACL itself. In our framework, SHACL shape graphs can serve as preconditions for actions, enabling validation checks prior to applying updates. To support richer and more practical update scenarios, we extend SHACL with richer targets, addressing a known limitation of the standard. In similar spirit, the SHACL Advanced Features Working Group~\footnote{\url{https://www.w3.org/TR/shacl-af/##targets}} is addressing this limitation by proposing SPARQL-based targets as a more flexible targeting mechanism. 

Moreover, we enrich path expressions by allowing constructs such as difference operators and some special type of properties, which are crucial for capturing fine-grained structural changes in RDF graphs. This extension is particularly essential for precisely capturing the impact of updates on SHACL constraints. While the expressivity required for such checks could in principle be captured by first-order logic, we opt to remain within an extended SHACL fragment, whose connection to Description Logics makes it well-suited for studying the complexity of static validation under updates for various fragments of SHACL.

\paragraph*{\textbf{Extending SHACL to Handle Updates.}}

We propose an extension of SHACL which we denote with SHACL$^+$. The syntax of SHACL$^+$ is defined as for SHACL, with the following extensions. First, we extend path expressions from SHACL to 
allow in addition to  the usual operators also a new \emph{difference} operator ($\setminus$) on symbols from $N^+_P$ and \emph{shape properties} of the form $(\phi_1,\phi_2)$, where $\phi_1$, $\phi_2$ are shape expressions without shape names. More specifically, path expressions in SHACL$^+$ are of the following syntax:
\begin{align*}
 E& ::= p \mid p^- \mid (\phi_1,\phi_2) \mid E \cdot E \mid  (E)^* \mid E \cup E \mid E \setminus E
\end{align*}
 where, $p, p^- \in N^+_P$, and $\phi_1$, $\phi_2$ are SHACL$^+$ shape expressions without shape names. Shape expressions are defined over such extended paths as expected. For example, $r \cdot (q \setminus (\exists r, \exists p))$ is an $E$-path expression; $(\exists r, \exists p)$ intuitively represents the Cartesian product of the domains of $r$ and $p$. Note that a \emph{singleton} property of the form $(a,b)$ is a special case where $a, b \in N_N$. 
Target expressions are of the form $(\phi, s)$, where $\phi$ is a complex shape without shape names. Targets are any Boolean combinations of target expressions. That is, if $T_1$ and $T_2$ are targets, then $T_1 \land T_2$, $T_1 \lor T_2$, and $\neg T_1$ are targets; we may write $T_1,T_2$ instead of $T_1 \land T_2$. The notions of shapes constraints and shapes graph are defined as for SHACL.

The evaluation of shape expressions w.r.t. an assignment $I$ is defined as in Table~\ref{tab:evaluation}, 
 and we add the  following two extensions for the evaluation of shape properties and path difference: 
\begin{align*}
    \evalu{(\phi_1,\phi_2)}{I} = & \{(a,b) \mid a \in \evalu{\phi_1}{I}, b \in \evalu{\phi_2}{I}\} \\ 
    \evalu{ E\setminus E'^+   }{I}= & \{({a},{b}) \mid (a,b) \in \evalu{ E}{I} \mbox{ and } (a,b) \not\in \evalu{E'^+}{I} \}.
\end{align*} That $I$ is a model of a target expression $(\phi, \mathsf{s})$ is naturally defined as $\evalu{\phi}{I} \subseteq \evalu{\mathsf{s}}{I}$. 
For targets, the definition of satisfaction is defined as expected, that is $I$ models $T_1$ and $T_2$ if $T$ is of the form $T_1 \land T_2$; $I$ models $T_1$ or $T_2$ if $T$ is of the form $T_1 \lor T_2$, and $I$ does not $T_1$, if $T$ is of the form $\neg T_1$. The notion of validation is then the same as for SHACL. 

\paragraph*{\textbf{SHACL-based Update Language for RDF Graphs}}
\label{sec:updates}

We now introduce an action language for updating RDF graphs under SHACL constraints, leveraging SHACL itself to define the syntax and semantics of the language. To allow for more flexible actions and to capture a wider range of updates, we base our language on the extension SHACL$^+$. The update language is composed of two types of actions, namely \emph{basic} and \emph{complex} actions. Basic actions enable two core operations: (i) adding or removing nodes satisfying a shape expression from the extension of a class, and (ii) adding or removing edges (properties) between pairs of nodes connected via a path expression. Complex actions allow for composing multiple actions and conditional executions based on the outcome of SHACL$^+$ validation checks over the graph.

To make actions more flexible, we assume a countable infinite set $N_V$ of variables disjoint from $N_N$, $N_C$, $N_P$, and $N_S$. In particular, we may allow variables in SHACL$^+$ shapes graphs in places of nodes. Path and shape expressions, targets, and shapes graphs with variables we call \emph{path formulas}, \emph{shape formulas}, \emph{target formulas} and \emph{shapes graph formulas}, respectively. 

\emph{Basic actions} $\beta$ and \emph{complex actions} $\alpha$ are defined by the following grammar:
\begin{align*}
\beta ::= {} & 
(B \add  \phi) \mid (B \del \phi) \mid (p \add E) \mid (p \del E) \\[1mm]
\alpha ::= {} &  \emptyset \mid \beta \cdot \alpha \mid (\mathcal{S}?\alpha [\alpha]) \cdot \alpha
\end{align*} 
where $B \in N_C$, $\phi$ is a SHACL$^+$ shape formula without shape names but possibly with variables,  
$p\in N_P$, $E$ is a  SHACL$^+$ path formula, 
$\emptyset$ denotes the empty action, and $\mathcal{S}$ is a SHACL$^+$ shapes graph formula.

A \emph{substitution} is a function $\sigma$ from $N_V$ to $N_N$. For any formula, an action, $\gamma$, we use $\sigma(\gamma)$ to denote the result of replacing in $\gamma$ every occurrence of a variable $x$ by a constant $\sigma(x)$.  An action $\alpha$ is ground if it has no variables, and $\alpha'$ is a ground instance of an action $\alpha$ if $\alpha' = \sigma(\alpha)$ for some substitution $\sigma$.
Intuitively, an application of a ground action $(A \add \phi)$ (or $(A \del C)$) on a graph $G$ stands for the addition (or deletion) of $A(c)$ to $G$ for each $c$ that makes $\phi$ true in $G$. 
Similarly, $ (p \add E)$ adds a $p$-edge whenever there is an $E$-path between two nodes; 
the case with deletion is as expected. Composition stands for successive action execution, and a conditional action $\mathcal{S}?\alpha_1 [\alpha_2]$ expresses that $\alpha_1$ is executed if $G$ validates $\mathcal{S}$, and $\alpha_2$ is performed otherwise. If $\alpha_2 = \emptyset$, then we have an action with a simple precondition as in classical planning languages, and write it $\mathcal{S}?\alpha_1$. 
The semantics of applying actions on graphs is defined only for ground actions. 

\begin{definition} Let  $G$ be a data graph and $\alpha$ a ground action. For a basic ground action $\beta$, $up(G,\beta)$ is defined as follows:
\begin{itemize}
    \item $up(G,\beta) = G \cup \{B(a) \mid a \in \evalu{\phi}{G}\}$ for $\beta = (B \add  \phi)$,
        \item $up(G,\beta) = G \cup \{p(a,b) \mid (a,b) \in \evalu{E}{G}\}$ for $\beta = (p \add E)$. 
\end{itemize}
The case with deletion $(\del)$ is analogous. 
Then, the result $up(G, \alpha)$ of applying a ground action $\alpha$ on $G$ is a graph defined recursively as follows: 
\begin{itemize}
     \item $up(G, \emptyset) = G$,
    \item $up(G, \beta \cdot \alpha') = up(up(G,\beta), \alpha)$ if $\alpha$ is of the form $\beta \cdot \alpha'$, and
    \item if $\alpha$ is of the form $ (\mathcal{S}?\alpha_1 [\alpha_2]) \cdot \alpha'$, then $up(G, \mathcal{S}?\alpha_1 [\alpha_2]) \cdot \alpha')$ is
    \begin{itemize}
        \item  $up(G, \alpha_1 \cdot \alpha')$, if $G$ validates $\mathcal{S}$, and
        \item  $up(G, \alpha_2 \cdot \alpha')$ if $G$ does not validate $\mathcal{S}$.
    \end{itemize}
\end{itemize}
\end{definition}
We illustrate the effects of an action update on a data graph with an example. 
\begin{example}\label{ex:projtermin} Consider $G$ and $(C,T)$ from Example~\ref{ex:validation}. Now, consider the action $\alpha$ that discharges patient $p_2$, which is removed from the active patients and added to the discharged ones; the physicians treating only this patient are removed. 
\begin{align*}
(ActivePatient \del p_2)\cdot (DischargePatient \add p_2)\cdot \\(Physician \del \forall treatsPatient.p_2)
\end{align*}
After applying $\alpha$ to $G$, we obtain the  graph $(G \cup \{DischargePatient(p_2))\} \setminus \{ActivePatient(p_2), Physician(Tom)\}$.
 The updated data graph $up(G,\alpha)$ does not validate the shapes graph $(C,T)$ since now $Tom$ will still have a $treatsPatient$-edge to patient $p_2$, but he does not satisfy the shape constraint for $\mathsf{PhysicianShape}$ as $Tom$ is not a physician and does not treat an active patient. 
 
\end{example}

We have not defined the semantics of actions with variables, that is non-ground actions. In our framework, variables are treated as parameters that must be instantiated with concrete nodes before execution. 

\begin{example} The action $\alpha_2$ with variables $x, y, z$ transfers the physician $x$ from treating patient $y$ to treating patient $z$ and is as follows:
\begin{align*}
& (\mathsf{s} \leftrightarrow Physician \land \exists treatsPatient.y, s' \leftrightarrow Patient), ((x, s), (y,s'), (z, s'))?   \\
            &(treatsPatient \del (x,y)) \cdot (treatsPatient \add (x,z))
\end{align*} 
Under the substitution $\sigma$ with $\sigma(x) = Tom$, $\sigma(y) = p_2$, $\sigma(z) = p_1$, the action $\alpha_2$ first checks whether the target node $Tom$ is an instance of the class $Physician$ and has a $treatsPatient$-property to $p_2$, and whether target nodes $p_1$ and $p_2$ are patients. If this is the case, the action removes the $treatsPatient$-edge between $Tom$ and $p_2$ and creates a $treatsPatient$-edge between $Tom$ and $p_1$.
\end{example}

In many scenarios, it is desirable for actions to have the ability to introduce “fresh” nodes into a data graph. Intuitively, the introduction of new nodes can be modeled in our setting by separating the domain of an assignment into the active domain and the inactive domain. The active domain consists of all nodes that occur in the data and shapes graph, whereas the inactive domain contains the remaining nodes. The inactive domain serves as a supply of fresh nodes that can be introduced into the active domain by executing actions. Since we focus on finite sequences of actions, a sufficiently large (but finite) inactive domain can always be assumed in the initial graph to provide an adequate supply of fresh constants. Likewise, node deletion can be captured by actions that move elements from the active domain back into the inactive domain.

\begin{example}\label{ex:preserving}
        Consider again our running example. Consider the action 
\[\alpha' =  (treatsPatient \del (\exists treatsPatient.p_2,p_2)),\] which removes the $treatsPatient$-edges to $p_2$ from physicians treating $p_2$. Clearly, $\alpha'$ can be applied to the updated graph $up(G,\alpha)$ from Example~\ref{ex:projtermin} by deleting $treatsPatient(Tom, p_2)$, resulting in a valid graph. Hence, the action $\alpha \cdot \alpha'$ is such that $up(G,\alpha \cdot \alpha')$ validates $(C,T)$. 
\end{example}

\section{Capturing Effects of Updates}\label{sec:alchoiq}

In this section, we define a transformation $tr_{\alpha}(\mathcal{S})$ that rewrites an input SHACL shapes graph $\mathcal{S}= (C,T)$ to capture all the effects of an action $\alpha$. This transformation can be seen as a form of regression, which incorporates the effects of a sequence of actions starting from the last to the first. 
More precisely, the transformation $tr_{\alpha}(\mathcal{S})$ takes a SHACL shapes graph $\mathcal{S}$ and an action $\alpha$ and rewrites them into a new shapes graph $\mathcal{S}_\alpha$, 
such that for \emph{every} data graph $G$: 
$up(G,\alpha) \mbox{ validates } \mathcal{S} \mbox{ iff } G \mbox{ validates } \mathcal{S}_\alpha$.
The idea is to simply update the constraints in $C$ and target expressions in $T$ accordingly, namely the actions replace every class name $B$ in $C$ and $T$ with $B \wedge C$ (or $B \land \neg C$) if the action is $B \add C$ (or $(B \del C)$); 
analogously for actions over properties.
If the action is of the form $\mathcal{S'}?\alpha_1 [\alpha_2]$, then we create two shapes graphs: one shapes graph $\mathcal{S}_{\alpha_1}$ if $G$ validates $\mathcal{S'}$ and $\mathcal{S}_{\alpha_2}$ if $G$ does not validate $\mathcal{S'}$. 

Before we proceed, to capture the effects of actions, we allow for Boolean combinations of shapes graphs. That is, if $\mathcal{S}$ and $\mathcal{S'}$ are shapes graphs, then $\mathcal{S} \land \mathcal{S'}$, $\mathcal{S} \lor \mathcal{S'}$, and $\neg \mathcal{S}$ are shapes graphs. Clearly, the notion of validation is defined as expected. However, allowing for boolean combinations of shapes graphs is just syntactic sugar, as they can be easily transformed (in linear time) into a single equivalent shapes graph $(C,T)$ in SHACL$^+$ that preserves validation (see Proposition~1 in Appendix).

\begin{definition}\label{def:transformation} Assume a SHACL$^+$ shapes graph $\mathcal{S}$ and a ground action $\alpha$. We use $\mathcal{S}_{Q \leftarrow Q'}$ to denote the new shapes graph that is obtained from $\mathcal{S}$ by replacing in $\mathcal{S}$ every class or property name $Q$ with the expression $Q'$. Then, the transformation $tr_\alpha({\mathcal{S}})$ of $\mathcal{S}$ w.r.t., $\alpha$ is defined recursively as follows:
\begin{align*}
    tr_{\epsilon}({\mathcal{S}}) &= {\mathcal{S}} \\
    tr_{(B \add C)\cdot \alpha}({\mathcal{S}}) &= (tr_{\alpha}({\mathcal{S}}))_{B \leftarrow B \lor C} \\
    tr_{(B \del C)\cdot \alpha}({\mathcal{S}}) &= (tr_{\alpha}({\mathcal{S}}))_{B \leftarrow B \land \neg C} \\
    tr_{(p \add E)\cdot \alpha}({\mathcal{S}}) &= (tr_{\alpha}({\mathcal{S}}))_{p \leftarrow p \cup E} \\
    tr_{(p \del E)\cdot \alpha}({\mathcal{S}}) &= (tr_{\alpha}({\mathcal{S}}))_{p \leftarrow p \setminus E} \\
    tr_{(\mathcal{S'}?\alpha_1 [\alpha_2]) \cdot \alpha}({\mathcal{S}}) &= (\neg\mathcal{S'} \lor tr_{\alpha_1\cdot \alpha}({\mathcal{S}})) \land (\mathcal{S'} \lor tr_{\alpha_2\cdot \alpha}({\mathcal{S}}))
\end{align*}
    \end{definition}

The transformation correctly captures the effects of complex actions. 

\begin{theorem}\label{th:regrescorrect}
    Given a ground action $\alpha$, a data graph $G$ and a SHACL shapes graph $\mathcal{S}$. Then, $up(G,\alpha)$ validates $\mathcal{S}$ if and only if $G$ validates $tr_{\alpha}(\mathcal{S})$.
\end{theorem}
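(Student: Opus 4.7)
My approach is by induction on the structure of the ground action $\alpha$. The base case $\alpha = \epsilon$ is immediate, since $up(G,\epsilon) = G$ and $tr_\epsilon(\mathcal{S}) = \mathcal{S}$, so $up(G,\epsilon)$ validates $\mathcal{S}$ iff $G$ does.

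The engine driving the inductive step is a \emph{substitution lemma} that I would prove separately: for every basic ground action $\beta$, every SHACL$^+$ path formula $E$, shape formula $\phi$, target formula, and shapes graph formula $\mathcal{S}$, we have $\llbracket E \rrbracket^{up(G,\beta)} = \llbracket E_\star \rrbracket^{G}$ and $\llbracket \phi \rrbracket^{up(G,\beta)} = \llbracket \phi_\star \rrbracket^{G}$, where the subscript $\star$ denotes the syntactic replacement dictated by $\beta$, namely $B \leftarrow B \lor C$ for $(B \add C)$, $B \leftarrow B \land \neg C$ for $(B \del C)$, $p \leftarrow p \cup E'$ for $(p \add E')$, and $p \leftarrow p \setminus E'$ for $(p \del E')$. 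Lifted to validation, this gives $up(G,\beta)$ validates $\mathcal{S}$ iff $G$ validates $\mathcal{S}_\star$. The lemma itself is proven by mutual induction on the structure of path and shape expressions; the atomic cases ($B$, $p$, $p^-$) unfold the definition of $up$, while the compound cases are a routine compositional check using the semantics from Table~\ref{tab:evaluation} and the two extended clauses for path difference and shape properties.

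Given the substitution lemma, the inductive step for basic actions is straightforward: for $\alpha = \beta \cdot \alpha'$, we have $up(G,\alpha) = up(up(G,\beta),\alpha')$, so by the IH applied to $up(G,\beta)$ and $\alpha'$, $up(G,\alpha)$ validates $\mathcal{S}$ iff $up(G,\beta)$ validates $tr_{\alpha'}(\mathcal{S})$, and by the substitution lemma this holds iff $G$ validates $(tr_{\alpha'}(\mathcal{S}))_\star$, which is exactly $tr_{\beta \cdot \alpha'}(\mathcal{S})$ by Definition~\ref{def:transformation}. For the conditional case $\alpha = (\mathcal{S'}?\alpha_1[\alpha_2]) \cdot \alpha'$, I split on whether $G$ validates $\mathcal{S'}$: if it does, then $up(G,\alpha) = up(G,\alpha_1 \cdot \alpha')$ and the IH gives that this validates $\mathcal{S}$ iff $G$ validates $tr_{\alpha_1\cdot \alpha'}(\mathcal{S})$; moreover, $G \models \mathcal{S'}$ forces $G \models \mathcal{S'} \lor tr_{\alpha_2 \cdot \alpha'}(\mathcal{S})$ automatically, while $G \models \neg\mathcal{S'} \lor tr_{\alpha_1 \cdot \alpha'}(\mathcal{S})$ reduces to $G \models tr_{\alpha_1 \cdot \alpha'}(\mathcal{S})$, matching the conjunction in $tr_\alpha(\mathcal{S})$. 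The opposite case is symmetric.

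The main obstacle I expect is the substitution lemma for path expressions. Proving that the replacement $p \leftarrow p \cup E'$ (respectively $p \leftarrow p \setminus E'$) inside an arbitrary regular path expression correctly mirrors the semantic effect of $(p \add E')$ (respectively $(p \del E')$) requires a compositional identity for every path constructor, and is precisely where the SHACL$^+$ features are indispensable: without path difference and shape properties, $p \setminus E'$ and the analogous replacements cannot even be written as legal SHACL$^+$ path formulas, so the syntactic regression would not live inside the target language. A secondary but more cosmetic step is handling the Boolean combinations of shapes graphs introduced by the conditional case, which is justified by the ``syntactic sugar'' remark preceding Definition~\ref{def:transformation}.
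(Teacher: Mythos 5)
Your proposal is correct and follows essentially the same route as the paper's proof: induction on the action, a one-step substitution lemma for basic actions (which the paper states directly at the level of validation, $up(G,\beta)$ validates $\mathcal{S}'$ iff $G$ validates $\mathcal{S}'_\beta$, proved by induction on shape/path expressions), and a case split on whether $G$ validates $\mathcal{S}'$ for conditionals. Your formulation of the lemma at the level of the evaluation function $\llbracket\cdot\rrbracket$ and your remark on why path difference and shape properties are indispensable are exactly the points the paper relies on.
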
  
\begin{proof}[Sketch.] 
We prove the claim by induction on the structure of the action $\alpha$. 
For the base case, the claim trivially holds: $up(G,\emptyset) = G$ and $tr_\emptyset(\mathcal{S}) = \mathcal{S}$.  For the step, for actions $\beta \cdot \alpha'$, 
for each type of basic action $\beta$ it can be shown by induction on the structure of shape expressions that for every shapes graph $\mathcal{S'}$, it holds that $up(G,\beta)$ validates $\mathcal{S'}$ iff $G$ validates $\mathcal{S'}_{\beta}$. For conditional actions of the form $(\mathcal{S'}?\alpha_1 [\alpha_2]\cdot \alpha')$ we split the cases based on whether $G$ validates $\mathcal{S'}$. 
\end{proof}


We illustrate the transformation above with an example. 

\begin{example}\label{ex:transform} Consider $\mathcal{S}= (C,T)$ and $\alpha$ from our running example. Then, the transformation $tr_\alpha({\mathcal{S}})$ is the new shapes graph $(C',T)$, where:
\begin{align*}
  C' = \{ & \mathsf{PatientShape} \leftrightarrow (ActivePatient \land \neg p_2) \lor (DischargedPatient \lor p_2),  \\
 &  \mathsf{PhysicianShape} \leftrightarrow (Physician \land \exists treatsPatient.\neg p_2)\lor\\
  & \hspace{2.9cm} \exists treatsPatient.(ActivePatient \land \neg p_2) \}, \\
 T = \{ & (Patient, \mathsf{PatientShape}), (\exists treatsPatient, \mathsf{PhysicianShape})\}      
\end{align*}

\end{example}

We extended SHACL to support expressive update actions, but if we restrict to sequences of basic additions using standard shape and path expressions, and deletions for shapes, the transformation stays within (almost) plain SHACL. Although targets may allow arbitrary shape expressions, these can be rewritten using only standard SHACL constructs with the minor extension of allowing $\top$. As our focus is on static validation via satisfiability checking, and in absence of SHACL satisfiability tools, we rely on first-order-logic-based tools, and strict conformance to plain SHACL is not essential; see Appendix for details.

\section{Static Validation under Updates for Arbitrary Graphs}

In this section, we consider a stronger form of reasoning about updates. As  shown in the previous section validation under updates can be reduced to standard validation within a minor extension of SHACL. Building on this result, we now turn to the central task of this paper: \emph{static validation under updates}. Broadly speaking, this task asks whether a given sequence of update actions always preserves the validation of a shapes graph, independently of a concrete data graph.

\begin{definition}
    Let $\mathcal{S}$ be a shapes graph and let $\alpha$ be an action. Then, $\alpha$ is an $\mathcal{S}$-\emph{preserving} action if for every data graph $G$ and for every ground instance $\alpha'$ of $\alpha$, we have that $G$ validates $\mathcal{S}$ implies $up(G,\alpha')$ validates $\mathcal{S}$. The \emph{static validation under updates} problem is:
    \begin{itemize}
        \item[] Given an action $\alpha$ and a shapes graph $\mathcal{S}$, is $\alpha$ $\mathcal{S}$-preserving?
    \end{itemize}
    \end{definition}

Using the transformation from Definition~\ref{def:transformation}, we can reduce static validation under updates to unsatisfiability of shapes graphs: an action $\alpha$ is not $\mathcal{S}$-preserving if and only if there is some data graph $G$ and a ground instance $\alpha^*$ of $\alpha$ such that $G$ validates $\mathcal{S}$ and $G$ does not validate $tr_{\alpha^*}(\mathcal{S})$. In particular, if such a ground instance of $\alpha$ exists, then there exists a ground instance obtained by substituting the variables with nodes from the input or an arbitrary fresh node. 

\begin{theorem} \label{th:statver} Let $\alpha$ be a complex action with $n$ variables and $\mathcal{S}$ a SHACL$^+$ shapes graph. Let $\Gamma \subseteq N_N$ be the set of nodes appearing in $\mathcal{S}$ and $\alpha$ together with a set of $n$ fixed fresh nodes. Then,
\begin{enumerate}
    \item[(i)] $\alpha$ is not $\mathcal{S}$-preserving, if and only if
    \item[(ii)]  $\mathcal{S}  \land \neg tr_{\alpha^*}(\mathcal{S})$ is satisfiable for some ground instance $\alpha^*$ obtained by replacing each variable in $\alpha$ with a node from $\Gamma$.  
\end{enumerate}
\end{theorem}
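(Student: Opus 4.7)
The plan is to establish the biconditional by proving each direction separately. The $(ii) \Rightarrow (i)$ direction follows immediately from Theorem~\ref{th:regrescorrect}, whereas $(i) \Rightarrow (ii)$ requires an isomorphism-style renaming argument to restrict substitutions to the finite set $\Gamma$.

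For $(ii) \Rightarrow (i)$, assume some ground instance $\alpha^*$ of $\alpha$ into $\Gamma$ makes $\mathcal{S} \land \neg tr_{\alpha^*}(\mathcal{S})$ satisfiable, and let $G$ be a witness. Then $G$ validates $\mathcal{S}$ but not $tr_{\alpha^*}(\mathcal{S})$, so by Theorem~\ref{th:regrescorrect}, $up(G,\alpha^*)$ does not validate $\mathcal{S}$. This exhibits $G$ together with the ground instance $\alpha^*$ as a witness that $\alpha$ is not $\mathcal{S}$-preserving.

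For $(i) \Rightarrow (ii)$, unfolding non-preservation yields a data graph $G$ and a ground instance $\alpha' = \sigma(\alpha)$ for some substitution $\sigma$ such that $G$ validates $\mathcal{S}$ while $up(G,\alpha')$ does not; Theorem~\ref{th:regrescorrect} then gives that $G$ witnesses satisfiability of $\mathcal{S} \land \neg tr_{\alpha'}(\mathcal{S})$. The catch is that $\sigma$ may use arbitrary constants of $N_N$, so $\alpha'$ need not be a ground instance over $\Gamma$. To transport the witness, let $\Delta$ be the set of constants appearing in $\mathcal{S}$ and $\alpha$, and $c_1,\ldots,c_n$ the fresh nodes, so $\Gamma = \Delta \cup \{c_1,\ldots,c_n\}$. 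Since $\alpha$ has at most $n$ variables, the image of $\sigma$ on those variables contains at most $n$ constants, so the part lying outside $\Delta$ injects into $\{c_1,\ldots,c_n\}$. Because $N_N$ is countably infinite, this injection extends to a bijection $h : N_N \to N_N$ that fixes every constant in $\Delta$; any $c_i$ that happens to appear in $G$ can be rerouted to otherwise unused nodes without disturbing the values of $h$ already fixed.

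It remains to verify two invariance properties, both by routine structural induction: (a) $G'$ validates $\mathcal{S}$ if and only if $h(G')$ does, since $h$ fixes the constants of $\mathcal{S}$ and is a bijection; and (b) $h(up(G',\alpha'')) = up(h(G'), h(\alpha''))$ for every ground action $\alpha''$, since the semantics of each SHACL$^+$ operator is defined via constructions that commute with bijective renamings of $N_N$. Setting $\alpha^* = h(\sigma(\alpha))$, which is a ground instance of $\alpha$ over $\Gamma$ because $h$ fixes $\Delta$, properties (a) and (b) together imply that $h(G)$ validates $\mathcal{S}$ while $up(h(G), \alpha^*)$ does not; by Theorem~\ref{th:regrescorrect}, $h(G)$ then witnesses satisfiability of $\mathcal{S} \land \neg tr_{\alpha^*}(\mathcal{S})$, as required.

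The main obstacle is the rigorous verification of the invariance properties (a) and (b) across all SHACL$^+$ constructs, including the extensions shape properties $(\phi_1,\phi_2)$, path difference, target formulas with Boolean combinations, and conditional actions whose guards are themselves shapes graphs that may reference additional constants. Once $\Delta$ is defined to include every constant appearing anywhere in $\mathcal{S}$ or in any subexpression of $\alpha$ (including guards of nested conditionals), the induction goes through smoothly, but the bookkeeping across the layered syntax of SHACL$^+$ and actions is the source of most of the technical detail.
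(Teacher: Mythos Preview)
Your proposal is correct and follows essentially the same approach as the paper: both directions rely on Theorem~\ref{th:regrescorrect}, and for $(i)\Rightarrow(ii)$ the paper likewise builds a new substitution $\sigma^*$ and a renamed graph $G^*$ by sending each $a_i\notin\Delta$ to a fresh $c_i\in\Gamma$. Your version is in fact slightly more careful than the paper's sketch, since you make the renaming an explicit bijection on $N_N$, handle the case where some $c_i$ already occurs in $G$, and isolate the two invariance properties that the paper simply asserts with ``Clearly''.
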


\begin{proof}[Sketch] For (i) implies (ii), assume $\alpha$ is not $\mathcal{S}$-preserving. Then, by definition there exists a data graph and a ground instance $\alpha'$ of $\alpha$ such that if the graph validates $\mathcal{S}$, then the result of applying $\alpha'$ to the graph validates $\mathcal{S}$. Let $G$ be such a graph and let $\sigma$ be such a substitution of the form $x_1 \rightarrow a_1, \ldots, x_n \rightarrow a_n$ with $\sigma(\alpha) = \alpha'$. That is, $G$ validates $\mathcal{S}$ and $up(G,\alpha')$ does not validate $\mathcal{S}$. The latter together with Theorem \ref{th:regrescorrect} implies that $G$ does not validate $tr_{\alpha'}(\mathcal{S})$. Hence, $G$ validates $\mathcal{S}  \land \neg tr_{\alpha'}(\mathcal{S})$. 
Now, let $\sigma^*$ be a substitution obtained from $\sigma$ as follows: $x_i \rightarrow a_i$ if $a_i$ appears in $\mathcal{S}$ or $\alpha$, and $x_i \rightarrow c_i$ with $c_i$ is a fresh node in $\Gamma$ if $a_i$ does not appear in $\mathcal{S}$ or $\alpha$. Moreover, let $G^*$ be the graph obtained by replacing in $G$ all nodes $a_i$ not appearing in $\mathcal{S}$ or $\alpha$ with $c_i$ from $\Gamma$. Clearly, $\alpha^* = \sigma^*(\alpha)$ is a desired ground instance of $\alpha$ and $G^*$ is such that $G^*$ validates $\mathcal{S}  \land \neg tr_{\alpha^*}(\mathcal{S})$. 
For (ii) implies (i), assume that  $\mathcal{S}  \land \neg tr_{\alpha^*}(\mathcal{S})$ is satisfiable. Let $\alpha^*$ be a ground instance of $\alpha$ obtained by replacing each variable in $\alpha$ with a node from $\Gamma$ and let $G$ be a graph that validates $\mathcal{S}  \land \neg tr_{\alpha^*}(\mathcal{S})$. Hence, $G$ validates $\mathcal{S}$ and $G$ does not validate $tr_{\alpha^*}(\mathcal{S})$. By Theorem \ref{th:regrescorrect}, the latter implies that $up(G,\alpha^*)$ does not validate $\mathcal{S}$, and therefore $\alpha$ is not $\mathcal{S}$-preserving.
\end{proof}
We illustrate static validation under updates with an example. 
\begin{example}
    The action $\alpha$ from Example \ref{ex:projtermin} is not $(C,T)$-preserving since $G$ validates $(C,T)$, but $up(G,\alpha)$ does not validate $(C,T)$. From Example \ref{ex:transform}, we can see that $G$ does not validate $(C',T)= tr_\alpha(C,T)$ since the model $I_{G,C'}$ does not satisfy $T$. That is, $Tom \in \evalu{\exists treatsPatient}{I_{G,C'}}$ but $Tom \not\in \evalu{\mathsf{PhysicianShape}}{I_{G,C'}}$ since $Tom$ does not satisfy any of the conjuncts defining the shape $\mathsf{PhysicianShape}$, and in particular $Tom \not\in \evalu{Physician \land  \exists treatsPatient. \neg p_2}{I_{G,C'}}$. Intuitively, nodes removed from the class $Physician$ should also be removed from the $treatsPatient$ property as in the $\mathcal{S}$-preserving action $\alpha^* =\alpha \cdot \alpha'$ from Example \ref{ex:preserving}. By applying $\alpha^*$ to $\mathcal{S}$, we obtain the following transformed shapes graph $tr_{\alpha^*}(\mathcal{S})$:
\begin{align*}   C^* = \{ & \mathsf{PatientShape} \leftrightarrow (ActivePatient \land \neg p_2) \lor (DischargedPatient \lor p_2),  \\
 &  \mathsf{PhysicianShape} \leftrightarrow (Physician \land  \exists E.\neg p_2)\lor \exists E.(ActivePatient \land \neg p_2) \}, \\
 T^* = \{ & (Patient, \mathsf{PatientShape}), (\exists E, \mathsf{PhysicianShape})\}      
\end{align*}
where $E$ is the path expression $(treatsPatient\setminus (\exists treatsPatient.p_2,p_2))$.
 \end{example}
\todo{to check whether the following is correct and clear}The above theorem provides an algorithm for static validation under updates by converting it into satisfiability checking of shapes graphs in SHACL$^+$. 
While satisfiability has been shown to be undecidable already for plain SHACL in~\cite{PARETI2022100721}, 
better upper bounds can be obtained by restricting SHACL$^+$ to Description Logic (DL)-like fragments. 
Specifically, in addition to disallowing expressions of the form $E = p$, $disj(E,p)$, and $closed(P)$, we also disallow path operators $ *$ and composition $\cdot$ in $E$, and limit shape properties in paths to singleton properties of the form $(a,b)$ where $a,b \in N_N$. For this fragment, the co-problem of static validation can be reduced to finite satisfiability in the description logic
$\mathcal{ALCHOIQ}^{br}$ -- an extension of the DL $\mathcal{ALCHOIQ}$ -- which has been shown in~\cite{AhmetajCOS17} to be \textsc{NExpTime}-complete. If we further restrict, shape expressions of the form $\geq_n E.\phi$ to only allow $n=1$, we can reduce to $\mathcal{ALCHOI}^{br}$, for which finite satisfiability is in \textsc{ExpTime}. 
The co\textsc{NExpTime}-membership holds even when allowing unrestricted shape properties which can be easily captured by extending $\mathcal{ALCHOIQ}^{br}$ with pairs 
of arbitrary concepts in places of roles. The matching upper bound immediately follows by extending with this construct the translation of $\mathcal{ALCHOIQ}^{br}$ \cite{AhmetajCOS17} to $\mathcal{C}^2$, the two-variable fragment of first-order predicate logic extended with counting quantifiers, where such constructs translate naturally. 
For the lower bounds, we reduce the \textsc{NexpTime} problem of finite satisfiability of $\mathcal{ALCOIQ}$ \cite{DBLP:journals/jair/Tobies00,AhmetajCOS17} (and  \textsc{ExpTime} \cite{AhmetajCOS17} of $\mathcal{ALCHOI}$) into the co-problem of static validation for SHACL under updates. 

Condition (ii) of Theorem \ref{th:statver} requires the existence of a ground instance among potentially exponentially many options, proportional to the number of variables appearing in $\alpha$. 
However,  
each shapes graph $\mathcal{S}$ can be straightforwardly translated into an equisatisfiable DL knowledge base $K_{\mathcal{S}}$ in the extended DLs $\mathcal{ALCHOI(Q)}^{br}$, which allows for boolean combinations of KBs. Exploiting this,  we can show the following: $\alpha$ is not $\mathcal{S}$-preserving 
iff $\mathcal{S} \land \neg tr_{\alpha^*}(\mathcal{S})$ is satisfiable for some ground instance $\alpha^* \in \Sigma$ iff $K_{\mathcal{S}} \land \neg K_{tr_{\alpha^c}(\mathcal{S})}$ is finitely satisfiable, where $\alpha^c$ is obtained from $\alpha$ by replacing each variable with a fresh node not occurring in $\mathcal{S}$ and $\alpha$.
Note that for a sequence of basic ground actions  $\alpha^*$,  the size of $tr_{\alpha^*}(\mathcal{S})$ may grow exponentially in the number of actions. Intuitively, if the same class or property name $B$ is updated $n$ times in the action, and each update action introduces another occurrence of $B$, then the resulting constraint may contain up to $2^n$ occurrences of $B$, leading to exponential blow-up. However, for static validation, it suffices to construct an equisatisfiable description logic knowledge base (or first-order logic formula), where we can control the size more effectively. To this end, we define a transformation $tr_{\alpha^*}(K_S)$, which introduces a fresh class or property name for each update action, along with simple equivalence axioms (e.g., $B' \equiv B \lor \varphi$) storing the effect of the action. 
The result of this transformation $tr_{\alpha^*}(K_S)$ is equisatisfiable to $K_{tr_{\alpha^*}}(\mathcal{S})$ but the size is linear in the input action if $\alpha^*$ is a sequence of basic actions. Clearly, for conditional actions, the transformation may yield a Boolean combination of exponentially many knowledge bases, but each of them will be of linear size. From the above follows that 
$K_{\mathcal{S}} \land \neg K_{tr_{\alpha^c}(\mathcal{S})}$ is finitely satisfiable iff $K_{\mathcal{S}} \land \neg tr_{\alpha^c}(K_{\mathcal{S}})$ is finitely satisfiable. It was shown in \cite{AhmetajCOS17} that finite satisfiability of $K_{\mathcal{S}} \land \neg tr_{\alpha^c}(K_{\mathcal{S}})$  KBs is in \textsc{NExpTime} for $\mathcal{ALCHOIQ}^{br}$ and \textsc{ExpTime} for $\mathcal{ALCHOI}^{br}$. The proof holds also for the extension of $\mathcal{ALCHOIQ}^{br}$ that allows shape properties in places of roles. 

The correctness of the following theorem immediately follows. 
\begin{theorem} \label{th:statvercomp}
We obtain the following complexity results for static validation:
\begin{itemize}
    \item The problem is \emph{undecidable}. It remains undecidable also when the input shapes graph and action uses only plain SHACL. 
    \item The problem is co\textsc{NExpTime}-complete, when the input uses the fragment of SHACL$^+$ that does not allow the operators $ *$ and $\cdot$ in path expressions, and expressions of the form $E = p$, $disj(E,p)$, and $closed(P)$. 
    \item The problem is \textsc{ExpTime}-complete if additionally shape expressions of the form $\geq_n E.\phi$ are restricted to only $n=1$ and shape properties in paths are restricted to singleton properties. 
\end{itemize}
    \end{theorem}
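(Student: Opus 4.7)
}} The proof has three parts: undecidability for full SHACL, matching upper and lower bounds in co\textsc{NExpTime} for the first restricted fragment, and in \textsc{ExpTime} for the second.

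For undecidability, the natural plan is a reduction from SHACL satisfiability, which is undecidable by \cite{PARETI2022100721}. Given a shapes graph $\mathcal{S}$, I would consider the empty action $\epsilon$ together with an auxiliary shapes graph obtained from $\mathcal{S}$ by a small construction that makes non-preservation under $\epsilon$ equivalent to the existence of a graph validating $\mathcal{S}$. Since $tr_{\epsilon}(\mathcal{S}) = \mathcal{S}$ trivially preserves validation, a more careful construction is needed: I would use a one-step conditional action $(\mathcal{S}?\alpha_1[\emptyset])$ with $\alpha_1$ chosen so that it breaks some easily expressible trivial constraint whenever the precondition $\mathcal{S}$ holds. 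Non-preservation of this combined shapes graph is then equivalent to satisfiability of $\mathcal{S}$, giving undecidability even in plain SHACL.

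For the upper bounds, I would start from Theorem~\ref{th:statver}, which reduces non-preservation to satisfiability of $\mathcal{S} \land \neg tr_{\alpha^*}(\mathcal{S})$ for some ground instance $\alpha^*$ from an exponentially large set $\Sigma$ of substitutions over $\Gamma$. The main obstacle is controlling the size of $tr_{\alpha^*}(\mathcal{S})$, which can blow up exponentially in the length of $\alpha^*$ because repeated updates on the same symbol $B$ nest duplicates of $B$. To avoid this, I would translate each shapes graph into an equisatisfiable DL knowledge base in $\mathcal{ALCHOIQ}^{br}$ (respectively $\mathcal{ALCHOI}^{br}$) extended with pair constructors $(\phi_1,\phi_2)$ in role positions. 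Then, instead of applying the syntactic regression, I would define a DL-level transformation $tr_{\alpha^*}(K_{\mathcal{S}})$ that, for each update event on a name $B$, introduces a fresh name $B'$ together with a short equivalence axiom $B' \equiv B \lor \varphi$ (or $B' \equiv B \land \neg \varphi$, and analogously for properties) and substitutes $B'$ for subsequent uses of $B$. The resulting KB has linear size in $|\alpha^*|$ for basic sequences and is a Boolean combination of such KBs for conditional actions; a proof by induction on the structure of $\alpha$ shows that $K_{\mathcal{S}} \land \neg tr_{\alpha^c}(K_{\mathcal{S}})$ is finitely satisfiable iff $\mathcal{S} \land \neg tr_{\alpha^*}(\mathcal{S})$ is satisfiable for some $\alpha^* \in \Sigma$, where $\alpha^c$ uses fresh nodes for variables. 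The algorithm then guesses, in single exponential time, a substitution into $\Gamma$ and invokes the known \textsc{NExpTime} (resp.\ \textsc{ExpTime}) finite satisfiability procedure of \cite{AhmetajCOS17}, yielding co\textsc{NExpTime} (resp.\ \textsc{ExpTime}). I would also verify that extending $\mathcal{ALCHOIQ}^{br}$ with pair role constructors preserves the \textsc{NExpTime} bound by lifting the translation to $\mathcal{C}^2$ given in \cite{AhmetajCOS17}, since pairs of concepts translate to $\varphi_1(x) \land \varphi_2(y)$ directly.

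For the lower bounds, I would reduce finite satisfiability of $\mathcal{ALCOIQ}$ (known \textsc{NExpTime}-hard by \cite{DBLP:journals/jair/Tobies00,AhmetajCOS17}) to the complement of static validation. Given a KB $K$, I would encode its axioms into a SHACL$^+$ shapes graph $\mathcal{S}_K$ in the first fragment of the theorem and choose an action $\alpha_K$ that trivially cannot preserve $\mathcal{S}_K$ precisely when $\mathcal{S}_K$ admits a model; a convenient choice is a one-step conditional that forces a violation on any validating graph. The analogous reduction from $\mathcal{ALCHOI}$ finite satisfiability (\textsc{ExpTime}-hard by \cite{AhmetajCOS17}) yields the \textsc{ExpTime} lower bound for the second fragment. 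The hardest part of the whole proof is establishing the linear-size DL transformation and its correctness, since this is what prevents a naive double-exponential blow-up when composing the exponential guess of the substitution with the exponential syntactic regression; once this lemma is in place, the remaining steps are straightforward appeals to existing results.
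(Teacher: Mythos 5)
Your proposal follows essentially the same route as the paper: a reduction from SHACL satisfiability for undecidability and from finite satisfiability of $\mathcal{ALCOIQ}$ / $\mathcal{ALCHOI}$ for the lower bounds, and for the upper bounds a translation into $\mathcal{ALCHOIQ}^{br}$ (resp.\ $\mathcal{ALCHOI}^{br}$, extended with pair constructors handled via the $\mathcal{C}^2$ translation) combined with exactly the paper's key lemma, namely the linear-size regression $tr_{\alpha}(K_{\mathcal{S}})$ that introduces fresh names with equivalence axioms $B' \equiv B \lor \varphi$ to avoid the exponential blowup, followed by an appeal to the finite-satisfiability results of \cite{AhmetajCOS17}. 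The only cosmetic deviations are that the paper's hardness gadget is a single basic action $(B \add c)$ against an added constraint $s \leftrightarrow \neg B$ with target $(c,s)$ rather than a conditional action, and that the paper replaces your exponential guess over groundings by a single canonical grounding with fresh nodes (exploiting that distinct nodes may share a domain element), though your guess-and-check (resp.\ enumerate-all) variant also stays within co\textsc{NExpTime} (resp.\ \textsc{ExpTime}).
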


\section{Implementation and Experiments}

To implement a solver for the static validation under updates problem, given a complex action $\alpha$ and a shapes graph $\mathcal{S}$, we compute $\neg tr_{\alpha^*}(\mathcal{S})$ and then check whether $\mathcal{S}$ is contained in $tr_{\alpha^*}(\mathcal{S})$, as detailed in Theorem \ref{th:statver}. Our approach is based on the translation of SHACL shape graphs into equisatisfiable FOL sentences \cite{PARETI2022100721}. The SHACL2FOL tool \cite{pareti2024shacl2fol} performs this translation, generating such sentences in the TPTP \cite{sutcliffe1998tptp} format. The TPTP file is then used by a theorem prover to determine satisfiability, enabling the SHACL2FOL tool to check both the satisfiability of individual shape graphs, and the containment problem between two shape graphs.

To enable static validation under updates, we have extended SHACL2FOL to compute the transformation of a shape graph under updates. This functionality accepts a shapes graph and a list of actions, in JSON format, as inputs. Given a shape graph $\mathcal{S}$, and its equisatisfiable FOL translation $\varphi$, and a complex action $\alpha$, we have implemented the regression approach that generates a FOL translation $\varphi_{\alpha}$ equisatisfiable to $tr_{\alpha^*}(\mathcal{S})$. Our implementation currently supports actions in the form $p \add E$ and $p \del E$, where $E$ is either a pair of shapes $(\phi_1,\phi_2)$, or a SHACL property path. 
This implementation effectively follows the regression approach from Section \ref{sec:alchoiq}, but as a direct transformation of the FOL sentence $\varphi$. The formalisation of actions in JSON format, their translation into logic expressions, and their integration into a TPTP file are novel extensions of the SHACL2FOL tool. The code for this implementation is available through the link in the Supplemental Material Statement.

\begin{figure}[htbp]
\centering

\begin{minipage}[t]{0.24\textwidth} 
\lstset{style=shaclstyle}
\begin{lstlisting}
sh:path p;
sh:(equals/disjoint) r;
\end{lstlisting}
\end{minipage}%
\hfill
\begin{minipage}[t]{0.41\textwidth} 
\lstset{style=shaclstyle}
\begin{lstlisting}
sh:path p;
sh:qualified(Min/Max)Count i;
sh:qualifiedValueShape [
  sh:(hasValue/class) c; ];
\end{lstlisting}
\end{minipage}%
\hfill
\begin{minipage}[t]{0.30\textwidth} 
\lstset{style=shaclstyle}
\begin{lstlisting}
sh:path P;
sh:(hasValue/class) c;
\end{lstlisting}
\end{minipage}
\caption{Three constraint templates where ``/'' denotes an alternative, p and r are property names, P is property path, i is an integer and c is a constant.}
\label{fig:shacl-snippets}
\end{figure}
A comprehensive benchmark of real world usages of SHACL is still missing from the literature, and for this reason our evaluation focuses on studying the effects of updates on different types of SHACL constraints. In particular, we focused on the constraint components that have been identified as the most problematic for the problem of satisfiability checking, namely: property pair equality, the qualified cardinality constraints, and sequence, alternative and transitive paths \cite{PARETI2022100721}. We use those constraints both to define the initial shape graph, and the updates to be performed. Our aim is not to cover all the possible interactions between constraints, but to show scalability across a wide range of constraints known for their theoretical complexity.

To generate a synthetic test case we create a shape graph consisting of a number of shapes, each one having a single target, equally divided among the four SHACL target definitions (excluding the implicit one). Each shape is initialized with a random constraint chosen from property pair equality, disjointness, a cardinality constraint (with limits from 0 to 2), or a constraint over a property path, using the constraint templates shown in Figure \ref{fig:shacl-snippets}. Whenever a node needs to be initialized, it is chosen randomly from a predefined set of constants of size double that of the number of shapes. Relations are initialized randomly from a predefined set of constants in the same way, with the exception that this set always includes the class membership relation \texttt{rdf:type}. This ratio of constants to shapes was chosen empirically to enable actions and shapes to interact through the reuse of the same constants, while also allowing for a limited number of new constants to be introduced. This is also used to create test cases where the split between true and false results of the static validation are more evenly balanced. 
Actions are created randomly in one of the two allowed types. Constraints and paths for actions are created in the same way as the constraints for the shapes, and draw from the same sets of node and relation constants.

We run our experiments using an Intel Core i7-9750H CPU and each datapoint shows the average over 10 randomized runs. We use the Vampire 4.9 theorem prover to determine the satisfiability of the TPTP sentences, requiring finite models for satisfiability, and using the \texttt{--mode portfolio} option to enable diverse, pre-tuned strategies to increase the likelihood of quickly solving the problem. In our first experiment, we use a shape graph of fixed size 10, and we measure the time to perform the static verification test as the number of actions scales from 1 to 200. The results of this experiments, in the left plot of Figure \ref{fig:increase}, show how the computation increases linearly with the number of actions. On average, the actions were shape preserving in 10\% of the cases. 

In our second experiment, we use an update list of 20 actions, and we scale the number of shapes in the original shapes graph from 10 to 70. The results of this second experiment, shown in the right plot of Figure \ref{fig:increase}, show how the computation time increases exponentially with the number of shapes. This exponential increase is in line with the known complexity of the shape containment problem \cite{PARETI2022100721}.  On average, the actions were shape preserving in 33\% of the cases.

Across both experiments, the theorem prover failed to find a proof in 16\% of the cases in the first experiment, and 8\% in the second. This is to be expected, as the satisfiability of certain combinations of shape constraints is known to be undecidable. It should also be noted that the performance of our tool strongly depends on the specific theorem prover being used and on its configuration. For example, Figure \ref{fig:increase} shows how enabling finite model checking increases the computation time.
Overall, the experiments show that the problem of static validation under updates, although complex, can be solved quickly for medium-sized shape graphs. It is important to note that the size of a shapes graph is linked with schema complexity and not data complexity, and thus it typically does not need to scale to very large numbers. The fact that the increase in computational time is linear in the number of actions is promising, showing that our regression approach could be applicable to domains involving large updates.

\begin{figure}[tbp]
  \centering
  \begin{subfigure}[b]{0.5\textwidth}
    \includegraphics[width=\linewidth]{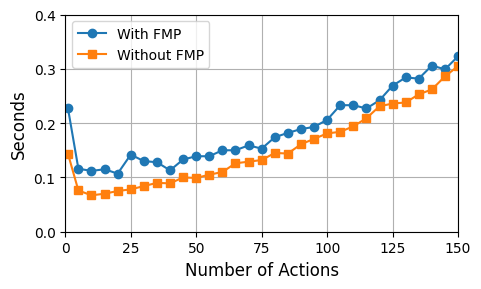}
    \label{fig:actions}
  \end{subfigure}
  \begin{subfigure}[b]{0.48\textwidth}
    \includegraphics[width=\linewidth]{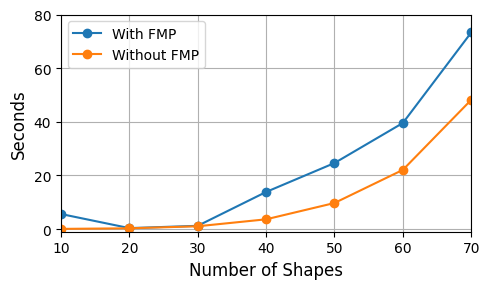}
    \label{fig:shapes}
  \end{subfigure}
  
  \caption{Time to solve the static validation problem with 10 shapes and an increasing number of actions (left), and with 20 actions and an increasing number of shapes (right), depending on usage of Finite Model Property.}
  \label{fig:increase}
\end{figure}

\section{Conclusion and Future Work} \todo{to rewrite}
We have presented a framework for formalizing updates for RDF graphs in the presence of SHACL constraints. We identified a suitable SHACL-aware update language, that can capture intuitive and realistic modifications on RDF graphs, covers a significant fragment of SPARQL updates and extends them to allow for conditional updates. We addressed an important problem: static validation under updates, which asks whether for every RDF graph that validates a SHACL shapes graph, the graph will remain valid after applying a set of updates. 
We showed that the latter problem can be reduced to (un)satisfiability of a shapes graph in a mild extension of SHACL and studied the complexity. 
We tested a prototype implementation of our regression-based method for static validation   
showing its potential 
for handling inputs of realistic size.

Toward lowering the complexity of static validation, we plan to analyze the problem for other relevant fragments of SHACL. We aim to further develop our implementation to support more action types, with the goal of creating a comprehensive tool for SHACL static analysis tasks.
We will also study other basic static analysis problems such as the {static type checking} problem \cite{DBLP:conf/pods/BonevaGHMS23}, which intuitively checks whether actions preserve a validation from source to target shapes graphs and problems related to planning.

\section*{Acknowledgements}

Ahmetaj was supported by the Austrian Science
Fund (FWF) projects netidee SCIENCE [T1349-N] and the Vienna Science and Technology Fund (WWTF) [10.47379/ICT2201]. Ortiz and Simkus were supported by the FWF projects PIN8884924, P30873 and
10.55776/COE12. The contributions of Pareti and Konstantinidis were supported by the UKRI Horizon Europe guarantee funding scheme for the Horizon Europe projects UPCAST (10.3030/101093216), RAISE (10.3030/101058479) and DataPACT (10.3030/101189771).

 \todo{we have 16 pages (excluding references)}
 \paragraph*{Supplemental Material Statement:} The code for our implementation of the regression approach, and its evaluation, is available on Zenodo \cite{paolo_2025_16633178}.\footnote{The code is also available at \url{https://github.com/paolo7/SHACL2FOL}} 

 \bibliographystyle{splncs04}
\bibliography{main}
\newpage
\section*{Appendix}

\paragraph{Boolean combination of shapes graphs}
For simplicity of presentation when capturing the effects of actions, in SHACL$^+$ we also allow for Boolean combinations of shapes graphs.
More formally, a SHACL$^+$ \emph{shapes graph} is defined recursively as follows: 
\begin{compactitem}
    \item  $(C,T)$ is a SHACL$^+$ shapes graph, and
    \item  if $\mathcal{S}_1$, $\mathcal{S}_2$ are SHACL$^+$ shapes graphs, then $\mathcal{S}_1 \land \mathcal{S}_2$, $\mathcal{S}_1 \lor \mathcal{S}_2$, and $\neg \mathcal{S}_1$ are SHACL$^+$ shapes graphs. 
\end{compactitem}
A SHACL$^+$ shapes graph is called \emph{normal}, if it is of the form $(C,T)$, where $C$ and $T$ are SHACL$^+$ constraints and targets, respectively. 
Validation is naturally defined as follows. 
\begin{definition}\label{def:boolshapesgraphs}
Consider SHACL$^+$ shapes graph $\mathcal{S}$ and data graph $G$. Then, $G$ \emph{validates} $\mathcal{S}$, if the following hold: 
\begin{compactitem}
    \item $I_{G,C} \models T$,  if $\mathcal{S}$ is of the form $(C,T)$, \item $G$ validates $\mathcal{S}_1$ and $\mathcal{S}_2$ if $\mathcal{S}$ if of the form $\mathcal{S}_1 \land \mathcal{S}_2$,
    \item $G$ validates $\mathcal{S}_1$ or $\mathcal{S}_2$ if $\mathcal{S}$ if of the form $\mathcal{S}_1 \lor \mathcal{S}_2$, and
    \item $G$ does not validate $\mathcal{S}_1$ if $\mathcal{S}$ is of the form $\neg \mathcal{S}_1$
\end{compactitem}

\end{definition}

Allowing Boolean combinations of shapes graphs is just syntactic sugar, as each 
SHACL$^+$ shapes graphs $\mathcal{S}$ can be converted into equivalent normal shapes graphs 
by simply renaming the shape names in each normal shapes graph appearing in $\mathcal{S}$, taking the union of all constraints, and pushing the Boolean operators to the targets. 
The following proposition immediately holds. 
\begin{proposition}\label{pr:boolshg} Consider a  
SHACL$^+$ shapes graph $\mathcal{S}$  and a data graph $G$. Then, $\mathcal{S}$ can be converted in linear time into a normal SHACL$^+$ shapes graph $(C_{\mathcal{S}}, T_{\mathcal{S}})$ such that
$G$ validates $\mathcal{S}$ iff $G$ validates $(C_{\mathcal{S}}, T_{\mathcal{S}})$, for every data graph $G$.
\end{proposition}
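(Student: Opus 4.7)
The plan is to proceed by structural induction on the SHACL$^+$ shapes graph $\mathcal{S}$ as defined recursively at the beginning of the appendix. The base case, in which $\mathcal{S}$ is already of the form $(C,T)$, is immediate: set $(C_\mathcal{S}, T_\mathcal{S}) = (C,T)$. The inductive step needs a normalization procedure for each of the three Boolean cases $\mathcal{S}_1 \land \mathcal{S}_2$, $\mathcal{S}_1 \lor \mathcal{S}_2$ and $\neg \mathcal{S}_1$, while exploiting the fact that the targets of a SHACL$^+$ shapes graph are themselves already closed under Boolean combinations.

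The crucial preparatory step is \emph{shape name renaming}: by induction hypothesis, $\mathcal{S}_i$ is equivalent to a normal $(C_i, T_i)$, and I would rename the shape names in $(C_1, T_1)$ and $(C_2, T_2)$ uniformly so that the two resulting pairs $(C_1', T_1')$ and $(C_2', T_2')$ use disjoint sets of shape names. Since the constraints in $\mathcal{S}$ are non-recursive, for any data graph $G$ the unique bottom-up assignment $I_{G, C_1' \cup C_2'}$ described in the preliminaries exists and, thanks to disjointness of shape vocabularies, splits cleanly: its restriction to the shape atoms over the vocabulary of $C_i'$ coincides with $I_{G, C_i'}$. This is the technical core and would be stated as a small lemma, proved by a straightforward induction on the stratified order in which constraints are evaluated. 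With this in hand, define
\begin{align*}
\mathcal{S}_1 \land \mathcal{S}_2 \;&\rightsquigarrow\; (C_1' \cup C_2',\; T_1' \land T_2'), \\
\mathcal{S}_1 \lor \mathcal{S}_2 \;&\rightsquigarrow\; (C_1' \cup C_2',\; T_1' \lor T_2'), \\
\neg \mathcal{S}_1 \;&\rightsquigarrow\; (C_1,\; \neg T_1).
\end{align*}

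Correctness for each case then follows from Definition \ref{def:boolshapesgraphs} combined with the splitting lemma: $G$ validates $(C_i', T_i')$ iff $I_{G, C_i'} \models T_i'$ iff $I_{G, C_1' \cup C_2'} \models T_i'$ (since $T_i'$ only mentions shape names from $C_i'$). Plugging this into the Boolean cases yields the equivalences $G$ validates $\mathcal{S}_1 \land \mathcal{S}_2$ iff $I_{G, C_1' \cup C_2'} \models T_1' \land T_2'$, and analogously for $\lor$. The negation case uses that $I_{G, C_1}$ is unique, so "$G$ does not validate $(C_1, T_1)$" coincides with $I_{G, C_1} \models \neg T_1$, i.e., $G$ validates $(C_1, \neg T_1)$.

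For the linear time bound, each recursive call performs a renaming (linear in the size of $\mathcal{S}_i$), a disjoint union of constraint sets, and a single Boolean combination at the target level; all of these are linear in the size of the inputs, and the recursion adds only a linear overhead overall. The step I expect to require the most care is the splitting lemma for the unique assignment under disjoint vocabularies: one must verify that the bottom-up stratification used to compute $I_{G, C_1' \cup C_2'}$ can be taken to be a merge of the stratifications for $C_1'$ and $C_2'$, which is exactly where non-recursion and vocabulary disjointness are used jointly. Once this lemma is established, the three inductive cases fall out uniformly and the proposition follows.
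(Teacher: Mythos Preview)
Your proposal is correct and follows essentially the same approach the paper sketches in the sentence preceding the proposition: rename shape names to make them disjoint, take the union of all constraints, and push the Boolean connectives into the (already Boolean-closed) target component. The paper gives no proof beyond that one sentence; you supply the missing correctness argument, and the splitting lemma for the unique bottom-up assignment under disjoint shape vocabularies is exactly the right technical core.

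One small caution on the linear-time claim. As you have written it, you rename both $(C_1',T_1')$ and $(C_2',T_2')$ at every binary node of the recursion; on a left-deep chain of $n$ conjunctions this costs $\Theta(n^2)$ total renaming work, so ``the recursion adds only a linear overhead overall'' is not justified by your description. The paper's wording---``renaming the shape names in each normal shapes graph appearing in $\mathcal{S}$''---suggests a one-shot renaming of all leaf shapes graphs upfront (e.g., tag each leaf with a unique index), after which $C_{\mathcal{S}}$ is a single union and $T_{\mathcal{S}}$ is built by a linear pass over the Boolean structure. Your inductive correctness argument carries over unchanged; only the placement of the renaming step needs this adjustment to meet the stated bound.
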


\vspace{1cm}
\noindent {\textbf{Theorem \ref{th:regrescorrect}}}
    Given a ground action $\alpha$, a data graph $G$ and a SHACL shapes graph $\mathcal{S}$. Then, $up(G,\alpha)$ validates $\mathcal{S}$ if and only if $G$ validates $tr_{\alpha}(\mathcal{S})$.
\begin{proof}[Sketch.] 
We prove the claim by induction on the size $l(\alpha)$ of the action $\alpha$. That is, $l(\emptyset) = 0$, $l(\beta \cdot \alpha) = 1 + l(\alpha)$, and $l(\mathcal{S}?\alpha_1 [\alpha_2]\cdot \alpha_3) = 1 + l(\alpha_1)+ l(\alpha_2)+ l(\alpha_3)$. For the base case, that is when $l(\emptyset) = 0$, the claim trivially holds by definition since $up(G,\emptyset) = G$ and $tr_\emptyset(\mathcal{S}) = \mathcal{S}$. 

For the induction step, assume that $\alpha$ is of the form $(B \add \phi) \cdot \alpha'$. Let $G' = G^{(B \add \phi)} = G \cup \{B(a) \mid a \in \evalu{\phi}{G}\}$, that is $G'$ is $G$ expanded with atoms $B(a)$  over nodes $a$ that satisfy $\phi$ in $G$. For every shapes graph $\mathcal{S'}$, it holds that $G'$ validates $\mathcal{S'}$ iff $G$ validates $\mathcal{S'}_{B \leftarrow B \lor \phi}$. This claim can be shown by an induction on the structure of the shapes graph $\mathcal{S'}$, and specifically on the structure of the shape expressions where $B$ occurs. In particular, $G'$ validates $tr_{\alpha'}(\mathcal{S})$ iff $G$ validates $(tr_{\alpha'}(\mathcal{S}))_{B \leftarrow B \lor \phi}$. Since,  $(tr_{\alpha'}(\mathcal{S}))_{B \leftarrow B \lor \phi} = tr_{\alpha}(\mathcal{S})$, we get $G'$ validates $tr_{\alpha'}(\mathcal{S})$ iff $G$ validates $tr_{\alpha}(\mathcal{S})$. By the induction hypothesis, we have that $G'$ validates $tr_{\alpha'}(\mathcal{S})$ iff $up(G', \alpha')$ validates $\mathcal{S}$. Hence,  $G$ validates $tr_{\alpha}(\mathcal{S})$ iff $G'$ validates $tr_{\alpha'}(\mathcal{S})$. Since $up(G', \alpha') = up(G, (B \add \phi)\cdot \alpha') = up(G,\alpha)$ by definition, we have the desired claim. 
Using similar arguments, we can show the other cases for $\alpha$ of the form $(B \del \phi) \cdot \alpha'$, or of the form $(p \add E)\cdot \alpha'$, and of the form $(p \del E) \cdot \alpha'$.

It is left to show the claim for the case with conditional actions  $\alpha$ of the form $(\mathcal{S'}?\alpha_1 [\alpha_2]\cdot \alpha')$. Assume $G$ validates $\mathcal{S'}$; the case where $G$ does not validate $\mathcal{S'}$ is analogous. By definition, $up(G, \alpha) = up(G, \alpha_1\cdot \alpha')$. By induction hypothesis, we know that $up(G, \alpha_1\cdot \alpha')$ validates $\mathcal{S}$ iff $G$ validates $tr_{\alpha_1\cdot \alpha'}(\mathcal{S})$ and hence, $up(G,\alpha)$ validates $\mathcal{S}$ iff $G$ validates $tr_{\alpha_1\cdot \alpha'}(\mathcal{S})$. Since $G$ validates $\mathcal{S'}$ and by definition of $tr_{(\mathcal{S'}?\alpha_1 [\alpha_2]\cdot \alpha')}(\mathcal{S})$ we can lift it up to $tr_\alpha(\mathcal{S})$. Hence, it follows that $up(G,\alpha)$ validates $\mathcal{S}$ iff $G$ validates $tr_{\alpha}(\mathcal{S})$. 
\end{proof}

\paragraph{\textbf{SHACL fragment}} We have extended SHACL to allow more expressive update actions to be performed over the data. However, if we restrict the actions to sequences of basic actions of the form: \[(B \add  \phi) \mid (B \del \phi) \mid (p \add E)\]
where $E$ and $\phi$ are SHACL path and shape expressions, respectively, then the result of the transformation would be in (almost) plain SHACL. Of course, the transformation could generate more complex expressions in targets of the form $(\phi, \mathsf{s})$, where $\phi$ is an arbitrary SHACL shapes expression without shape names. In fact, it suffices to simply allow targets expressions of the form $(\top, \mathsf{s})$. That is, for a shapes graph $(C,T)$ we can simply generate an equivalent shapes graph $(C',T')$, such that for each shape name $\mathsf{s}$ and for each target expression $(\phi_i, \mathsf{s}) \in T$ we add $(\top, \mathsf{s}) \in T'$ and for the corresponding constraint $\mathsf{s} \leftrightarrow \phi' \in C$ we add $\mathsf{s} \leftrightarrow \bigwedge_i(\neg \phi_i \lor \phi')$ in $C'$. 
It is important to note that update actions that delete $p$-edges between nodes connected via a path expression cannot be directly supported. The effects of such actions is captured through the operator ($\setminus$), which is not part of the standard SHACL path language.  

By working within this restricted action language, we are able to leverage existing SHACL validators for validation tasks under updates. While our approach tightly leverages SHACL validation to model updates, our focus is on static validation, and specifically on verifying the satisfiability of shapes graphs, rather than dynamic validation using SHACL tools. Therefore, given the current lack of tools for satisfiability checking in SHACL, the distinction between whether the resulting constraints strictly conform to SHACL or represent a minor extension is not central to our static analysis objectives. 


\vspace{1cm}
\noindent {\textbf{Theorem \ref{th:statvercomp}}}
We obtain the following complexity results for static validation:
\begin{itemize}
    \item The problem is \emph{undecidable}. It remains undecidable also when the input shapes graph and action uses only plain SHACL. 
    \item The problem is co\textsc{NExpTime}-complete, when the input shapes graph and action uses SHACL$^+$ constructs that do not allow: (1) the operators $ *$ and $\cdot$ in path expressions, and (2) shape expressions of the form $E = p$, $disj(E,p)$, and $closed(P)$. 
    \item The problem is \textsc{ExpTime}-complete if additionally shape expressions of the form $\geq_n E.\phi$ are restricted to only $n=1$ and shape properties in paths are restricted to singleton properties. 
\end{itemize}
\begin{proof}
For the hardness, it suffices to show that satisfiability of SHACL can be reduced in polynomial time to static verification of shapes graph expressed in plain SHACL. In particular, a SHACL shapes graph $\mathcal{S} = (C,T)$ is satisfiable if and only if the action $(B \add c)$ is not preserving the SHACL shapes graph $\mathcal{S'}= (C \cup \{s \leftrightarrow \neg B\}, T \cup \{(c, s)\})$, where $B \in N_C$, $s \in N_S$, $c \in N_N$ are fresh names not occurring in $(C,T)$. To show that $\mathcal{S}$ is satisfiable implies that $\alpha$ is not $\mathcal{S'}$-preserving, let $G$ be a graph that validates $(C,T)$ and does not use $B$ and $c$. Then, $G' = G \cup \{(B'(c))\}$, where $B'$ is a fresh class name, validates $\mathcal{S'}$, but $up(G', (B \add c))$ doesn't validate $\mathcal{S'}$ . For the other direction, let $G$ be a graph that validates $\mathcal{S'}$ such that $up(G, (B \add c))$ does not validate $\mathcal{S'}$. Since $G$ validates $\mathcal{S'}$ and since $B$, $s$, $c$ do not occur in $\mathcal{S}$, then $G$ also validates $\mathcal{S}$, which shows that $\mathcal{S}$ is satisfiable. 

For the subfragments of SHACL we consider, we reduce from finite satisfiability of description logic knowledge bases. First we introduce the description logic $\mathcal{ALCOIQ}$. An $\mathcal{ALCOIQ}$ knowledge base $K$ is a tuple consisting of an ABox and a TBox. W.l.o.g. we assume here that the ABox is empty. Then a TBox $K$ is a finite set of axioms of the form $\phi \sqsubseteq \phi'$, where $\phi$ and $\phi$ are (restricted) shapes expressions, called concepts in description logics, obeying the following grammar: 
    \begin{align*}
 \phi, \phi'::= \top \mid B \mid c \mid \phi\land \phi'  \mid \neg \phi \mid  \geq_n r.\phi 
 \end{align*} 
 where $r \in N^+_P$ is a role, $B \in N_C$, $c \in N_N$. The semantics is defined in terms of interpretations $I = (\Delta^I, \cdot^I)$. Since we are interested in finite satisfiability, we assume finite domain $\Delta^I$. Then, for each class name $B$, $B^I \subseteq \Delta^I$; for each property name $p$, $p^I \subseteq \Delta^I \times \Delta^I$, and $c^I \in \Delta^I$ for each node $c$. The function $\cdot^I$ is extended to all $\mathcal{ALCOIQ}$ expressions (namely concepts and roles) as usual (see Table \ref{tab:evaluation}). For an inclusion $\phi \sqsubseteq \phi'$, $I$ satisfies $\phi \sqsubseteq \phi'$, if $\phi^I  \subseteq \phi'^I$. The notion of satisfaction extends naturally to TBoxes and knowledge bases, that is a TBox $K$ is satisfiable if there exists an interpretation $I$ that satisfies all the axioms in the TBox. 
 Now let $K$ be an $\mathcal{ALCOIQ}$ TBox. For each concept inclusion $\phi \sqsubseteq \phi' \in K$, we write the shape expression $\neg \phi \lor \neg \phi'$. Let $C_K$ be the shape expression in conjunctive normal form obtained by taking the conjunction of all such inclusions in $K$. More precisely, $C_K =\{\bigwedge(\neg \phi \lor \neg \phi')\mid \phi \sqsubseteq \phi' \in K\}$. Then, $K$ is finitely satisfiable iff the action $(B \add \{c\})$ is not $(C ,T)$-preserving, where $C = (s \leftrightarrow {C}_K, s' \leftrightarrow \neg B')$, $T = \{(\top, s), (c,s')\}$, where $c \in N_N$ and $B \in N_C$ are not occurring in ${K}$. The correctness of this claim follows from analogous arguments to above. It was shown in  \cite{DBLP:journals/jair/Tobies00,AhmetajCOS17} that finite satisfiability of $\mathcal{ALCOIQ}$ is \textsc{NExpTime}-complete. From the claim follows the co\textsc{NExpTime}-hardness of static verification even for the case when the input shapes graph and action uses the fragment of plain SHACL that only allows for $\mathcal{ALCHOIQ}$ expressions, with the only addition of using $\top$ in the target. The same encoding can be used for $\mathcal{ALCOI}$ knowledge bases which only allow for $\exists r.\phi$ instead of $\geq_n r.\phi$, i.e., $n=1$, thus showing the \textsc{ExpTime} lower bound \cite{AhmetajCOS17}. 

 Membership for unrestricted SHACL$^+$ follows from ~\cite{DBLP:conf/semweb/ParetiKMN20}, which showed that checking satisfiability is undecidable already for (plain) SHACL shapes graphs. Obtaining a co\textsc{NExpTime} (and \textsc{ExpTime} upper bound) is more involved. To proceed we first show that every SHACL$^+$ shapes graph over the $\mathcal{ALCHOIQ}$ constructs described above and including shape names as well, can be converted into an equisatisfiable $\mathcal{ALCHOIQ}^{br}$ knowledge base -- we refer to \cite{AhmetajCOS17} for the full definition of this logic. Roughly speaking, this logic extends $\mathcal{ALCHOIQ}$ with the ($\setminus$) operation between properties, the singleton properties $(a,b)$ with $a,b \in N_N$, and boolean combinations of axioms and knowledge bases. 
 Note that 
 $\mathcal{ALCHOIQ}^{br}$ does not allow for arbitrary shape properties in places of roles, but an extension of this logic with such constructs preserves membership of finite satisfiability in \textsc{NExpTime}  -- hence showing membership in co\textsc{NExpTime} for static validation also with shape properties -- as the upper bound for finite satisfiability of $\mathcal{ALCHOIQ}^{br}$ follows from the translation into $\mathcal{C}^2$ \cite{AhmetajCOS17}, and the \textsc{NExpTime} upper bound for finite satisfiability of $\mathcal{C}^2$ formulas established by Pratt-Hartmann \cite{Pratt-Hartmann05} and encoding such expressions is natural in $\mathcal{C}^2$. In the following, by a slight abuse, when we write $\mathcal{ALCHOIQ}^{br}$ we consider the extended version that allows tuples $(\phi_1,\phi_2)$ of arbitrary $\mathcal{ALCHOIQ}^{br}$ concepts $\phi_1$, $\phi_2$ in places of roles; these are interpreted as expected, intuitively as the cartesian product of the nodes satisfying $C_1$ and $C_2$. Note that this does not hold for $\mathcal{ALCHOI}^{br}$, whose \textsc{ExpTime} membership of finite satisfiability is shown via translation to the guarded two-variable fragment of first-order logic and such constructs that compute the cartesian product would not be immediately translatable to this fragment. 
 
 Let $(C,T)$ be a SHACL$^+$ shapes graph that does not allow: (1) the operators $ *$ and $\cdot$ in path expressions, and (2) shape expressions of the form $E = p$, $disj(E,p)$, and $closed(P)$. Then $K_{C,T}$ is an $\mathcal{ALCHOIQ}^{br}$ KB defined as follows:
 \begin{itemize}
     \item We construct $K_C = \{\bigwedge(\mathsf{s} \sqsubseteq \phi) \mid \mathsf{s}\leftrightarrow \phi \in C\}$ as the conjunction of axioms of the form $\mathsf{s} \sqsubseteq \phi$  for each $\mathsf{s} \leftrightarrow \phi \in C$. 
     \item We construct $K_T$ over the target $T$ where instead of each  target expressions $(\phi, \mathsf{s})$ occurring in $T$ we write $\phi \sqsubseteq \mathsf{s}$. Note that in SHACL$^+$ we allow for boolean combinations of target expressions in $T$ and hence, $K_T$ will be a boolean knowledge base.  
 \end{itemize}
 Let $K_{C,T}$ be $K_C \land K_T$. Note that shape names in $C$ and $T$ are treated as class names. Then, it is not hard to see that following holds: \[(C,T) \mbox{ is satisfiable iff } K_{C,T} \mbox{ is satisfiable }(*)\] 

 Second, note that for sequences of basic actions the size of the constraints may grow exponentially in the number of actions. Consider for instance the constraint $\varphi = s \leftrightarrow B$ and the action sequence $\alpha_1 \cdot \alpha_2$ where $\alpha_1 = (B \add \exists r.B)$ and $\alpha_2 = (B \add \exists p.B)$. The transformation $tr_{\alpha_1 \cdot \alpha_2} (\varphi) = (tr_{\alpha_2}(\varphi))_{\alpha_1}$ on $\varphi$ first applies $\alpha_2$ by replacing $B$ replaced with $B \cup \exists  p.B$. Then, $\alpha_1$ is applied to the resulting constraint, namely $tr_{\alpha_1}(s \leftrightarrow B \lor \exists p.B)$ by replacing every occurrence of $B$ again with $B \lor \exists r.B$. The transformed  constraint is $s \leftrightarrow B \cup \exists r.B \cup \exists p.(B \lor \exists r.B)$, which now has 4 occurrences of $B$. Thus, for $n$ actions of the above form updating $B$, the number of occurrences of $B$ in the resulting constraint will be $2^n$. However, we can show that there is a DL KB of polynomial size (even linear) after the transformation w.r.t. an action that is satisfiable if and only if the shapes graph obtained after the transformation w.r.t. the action is satisfiable. 

 We first define the transformation of a knoweldge base $K$ w.r.t. a sequence $\alpha$ of ground actions defined over $\mathcal{ALCHOIQ}^{br}$. We use ${K}_{Q \leftarrow Q'}$ to denote the new KB that is obtained from $K$ by replacing in $K$ every class or property name $Q$ with the expression $Q'$. Then, the transformation $tr_\alpha(K)$ of $K$ w.r.t., $\alpha$ is defined recursively as follows:
\begin{align*}
    tr_{\epsilon}(K) &= {K} \\
    tr_{ (B \del C) \cdot \alpha}(K) &= (tr_{\alpha}(K))_{B \leftarrow B'} \land   B' \equiv B \land \neg \phi \\
     tr_{ (B \add C)\cdot \alpha}(K) &= (tr_{\alpha}(K))_{B \leftarrow B'} \land   B' \equiv B \lor \phi \\
      tr_{ (p \add E)\cdot \alpha}(K) &= (tr_{\alpha}(K))_{p \leftarrow p'} \land   p' \equiv p \cup E \\
       tr_{ (p \del E)\cdot \alpha}(K ) &= (tr_{\alpha}(K))_{p \leftarrow p'} \land   p' \equiv p \setminus E \\
    tr_{(K'?\alpha_1 [\alpha_2]) \cdot \alpha}({K}) &= (\neg K' \lor tr_{\alpha_1\cdot \alpha}({K})) \land (K' \lor tr_{\alpha_2\cdot \alpha}({K}))
\end{align*}
 where $B'$, $p'$ are fresh class and role names not occurring in the input $tr_{\alpha}(K)$.
It is straightforward to see that the result of the transformation w.r.t. a sequence of basic actions is a KB of linear size in the input. Intuitively, for a sequence $\alpha = \alpha_1 \cdot \alpha_2$ from above, the result of applying $\alpha$ to a KB $K_\phi = s \sqsubseteq B$ is the KB obtained by replacing each class name $B$ in $K$ with a fresh class name $B'$ and adding an equivalence axiom capturing the effect of the action, namely $tr_{\alpha_1}(K_{\phi}) = (K_{\phi})_{B \leftarrow B'} \land  (B' \equiv B \lor \exists p.B) = (s \sqsubseteq B') \land (B' \equiv B \lor \exists p.B)$. Now, $tr_\alpha(K_\phi) = (tr_{\alpha_1}(K_{\phi}))_{B \leftarrow B''} \land (B'' \equiv B \lor \exists r.B)$, which is the new KB $(s \sqsubseteq B') \land (B' \equiv B'' \lor \exists p.B'') \land (B'' \equiv B \lor \exists r.B)$. The algorithm applies the updates from right to left by introducing fresh names $B_1$, $B_2$, $\ldots$ that store intermediate updated versions of $B$. Substituting these back produces the same final constraint as applying updates directly backward. 

The above transformation produces a KB that grows lineraly in the size of the actions for basic actions.
Of course, action with preconditions can generate an exponential number of such KBs in the size of the action but each of them will be of linear size. 

Now, let $\alpha$ be a ground complex action that for an input and let $\mathcal{S}$ be a  shapes graph. Then, the following holds:

\[tr_{\alpha}(\mathcal{S}) \mbox{ is satisfiable iff }tr_{\alpha_{K}}(K_{\mathcal{S}}) \mbox{ is satisfiable.} (**)\]
where $\alpha_K$ is the action obtained from $\alpha$ by replacing each shapes graph $\mathcal{S}$ in $\alpha$ with $K_{\mathcal{S}}$. 
By claim (*), we have that $tr_{\alpha}(\mathcal{S}) $ is satisfiable iff $K_{tr_{\alpha}(\mathcal{S})}$ is satisfiable. It is left to show that $K_{tr_{\alpha}(\mathcal{S})}$ is satisfiable iff $tr_{\alpha_{K}}(K_{\mathcal{S}})$ is satisfiable, which can be shown by a straightforward induction on the structure of $\alpha$. Roughly, one can show that every model of $K_{tr_{\alpha}(\mathcal{S})}$ can be extended to a model of  $tr_{\alpha_{K}}(K_{\mathcal{S}})$ by simply interpreting the fresh class (property) names in the same way as the class (property) names they substituted. Similarly, every model of $tr_{\alpha_{K}}(K_{\mathcal{S}})$  restricted to the class and property names in $K_{\mathcal{S}}$ is a model of $K_{tr_{\alpha}(\mathcal{S})}$.
Unfolding the shape names with the definitions in the equivalence axioms in $tr_{\alpha_{K}}(K_{\mathcal{S}})$ results in the  knowledge base $K_{tr_{\alpha}(\mathcal{S})}$, which may be exponential in the size of $\alpha$. However, the equisatisfiable $tr_{\alpha_{K}}(K_{\mathcal{S}})$ is of size only linear in the input.

By condition (ii) of Theorem \ref{th:statver}, there exists a ground instance among potentially exponentially many such options in the number of variables appearing in $\alpha$. That is, for $n$ variables in $\alpha$ and $m$ nodes (constants) appearing in $\mathcal{S}$ and $\alpha$, the number of possible ground instances of the action $\alpha$ is bounded by $n^m + 1$. If neither $\mathcal{S}$ nor $\alpha$ contain nodes, then to obtain $\alpha^*$ it suffices to simply replace each variable in $\alpha$ with an arbitrary fresh node.
However, by exploiting the above claims, and the fact that in DLs the interpretation of two nodes may be the same domain element, we can show that we could simply consider one "canonical" ground instance of the action that sutbstitutes every variable with a fresh name not occurring in the input. We are ready for the main claim which reduces co-problem of static validation to satisfiability checking of an $\mathcal{ALCHOI(Q)}^{br}$ knowledge base. 

For an input shapes graph $\mathcal{S}$, and action $\alpha$ over $\mathcal{ALCHOI(Q)}^{br}$ constructs,  the following hold:

 \begin{enumerate}
    \item[(i)] An action $\alpha$ is not $\mathcal{S}$-preserving, if and only if
   \item[(ii)]  $\mathcal{S}  \land \neg tr_{\alpha^*}(\mathcal{S})$ is satisfiable for some ground instance $\alpha^*$ obtained by replacing each variable in $\alpha$ with a node from $\Gamma$, if and only if 
    \item[(iii)]    $K_{\mathcal{S}} \land \neg {tr_{\alpha^c_K}(K_{\mathcal{S}})}$ is finitely satisfiable,   where $\alpha^c_K$ is obtained from $\alpha_K$ by replacing each variable with a fresh node not occurring in $\mathcal{S}$ and $\alpha$.
\end{enumerate}
We show that (ii) implies (iii). By (*) and (**) we conclude that since $\mathcal{S}  \land \neg tr_{\alpha^*}(\mathcal{S})$ is satisfiable then $K_{\mathcal{S}} \land \neg {tr_{\alpha^*_K}(K_{\mathcal{S}})}$ is satisfiable. Let $I$ be a model of the latter. Let $\sigma$ be such a substitution of the form $x_1 \rightarrow a_1, \ldots, x_n \rightarrow a_n$ with $\sigma(\alpha_K) = \alpha^*_K$.  Moreover, assume that $x_1 \rightarrow a'_1, \ldots, x_n \rightarrow a'_n$ is the substitution that transforms $\alpha_K$ into $\alpha^c_K$. Let $I'$ be an instance that coincides with $I$ except that $(a'_i)^{I'} = (a_i)^I$ for each $1 \leq i \leq n$. Then, $I'$ is a model of $K_{\mathcal{S}} \land \neg {tr_{\alpha^c_K}(K_{\mathcal{S}})}$.

For (iii) implies (ii), let  $x_1 \rightarrow a'_1, \ldots, x_n \rightarrow a'_n$ be the substitution $\sigma$ that transforms $\alpha_K$ into $\alpha^c_K$ and let $I'$ be a model of $K_{\mathcal{S}} \land \neg {tr_{\alpha^c_K}(K_{\mathcal{S}})}$. 
Now, let $\sigma^*$ be a substitution obtained from $\sigma$ as follows: $x_i \rightarrow a_i$ if $a_i$ appears in $\mathcal{S}$ or $\alpha$ and $(a_i)^{I'} = (a'_i)^{I'}$ and let $x_i \rightarrow c_i$ with $c_i$ is a fresh node in $\Gamma$ if there is no node $a_i$ appearing in $\mathcal{S}$ or $\alpha$ such that $(a_i)^{I'} = (a'_i)^{I'}$. Then, $I$ which coincides with $I'$ except that $(\sigma^*(x_i))^I=\sigma^*(x_i)$, i.e., nodes are interpreted as themselves, is a model of $K_{\mathcal{S}} \land \neg {tr_{\alpha^*_K}(K_{\mathcal{S}})}$. From the latter, (iii), and (ii), follows that $\sigma^*$ is the desired substitution that transforms $\alpha$ into a ground instance $\alpha*$ such that $\mathcal{S}  \land \neg tr_{\alpha^*}(\mathcal{S})$ is satisfiable.

 The above claim, shows that we can reduce the co-problem of static validation under updates for the fragment of SHACL$^+$ inputs over $\mathcal{ACLHOI(Q)}^{br}$ constructs to checking finite satisfiability of $K_{\mathcal{S}} \land \neg {tr_{\alpha^c_K}(K_{\mathcal{S}})}$ in $\mathcal{ACLHOI(Q)}^{br}$. Clearly, ${tr_{\alpha^c_K}(K_{\mathcal{S}})}$ may be a boolean combination of exponentially many KBs where each of them is of polynomial size. It was shown in \cite{AhmetajCOS17} that finite satisfiability of such KBs is in \textsc{NExpTime} for $\mathcal{ALCHOIQ}^{br}$ and \textsc{ExpTime} for $\mathsf{ALCHOI}^{br}$. The proof holds also for the extension of $\mathcal{ALCHOIQ}^{br}$ that allows shape properties in places of roles.

\end{proof}

\paragraph*{\textbf{Implementation and Experiments}} Given a shape graph $\mathcal{S}$ and its equisatisfiable FOL sentence $\varphi$, and a ground action $a$ that the regression approach models as the substitution of expressions of type $r(x,y)$ with another expression $\phi$, $\varphi_{\{a\}} = \varphi^{r\rightarrow j} \wedge \forall x, y.\ j(x,y) \iff  \phi$, where $j$ is a fresh relation and $\varphi^{r\rightarrow j}$ is the sentence obtained by substituting every occurrence of relation $r$ with $j$. These substitutions result in a syntactic increase of the TPTP sentence linear to the number of ground actions performed. For example, let $\alpha$ be composed of a single ground action $p \add q$, where $p$ and $q$ are property names. Sentence $\varphi_{\alpha}$ is then generated by 1) substituting every syntactic occurrence of expression $r(x,y)$ in $\varphi$, for any variable or constant $x$ and $y$, with $j(x,y)$, and 2) appending axiom $\forall x, y . \ j(x,y) \iff r(x,y) \vee q(x,y)$ to $\varphi$. 

\todo{rewrote the part below. check if correct}

Our implementation currently supports a complex action comprised of a sequence of basic actions belonging defined in Section \ref{sec:updates}. The first type of action is the addition or removal of a property $p$ from a path SHACL$^+$ path $E^*$; we denote these actions $(p \add E)$ and $(p \del E)$. 

\end{document}